\documentclass[11pt]{article}
\usepackage[paper=letterpaper,margin=1in]{geometry}

%\PassOptionsToPackage{square,numbers}{natbib}
\usepackage{amsmath,amssymb,amsfonts}
\usepackage{fullpage}

%\usepackage{natbib}
%\bibliographystyle{abbrvnat}
%\setcitestyle{authoryear,open={((},close={))}}

\usepackage{comment}
\usepackage{xspace}
\usepackage{bbm}
\usepackage{algorithm}
\usepackage{algpseudocode}

\newenvironment{proof}{\noindent{\bf Proof : \ }}{\hfill$\Box$\par\medskip}

\newtheorem{theorem}{Theorem}
\newtheorem{corollary}[theorem]{Corollary}
\newtheorem{lemma}[theorem]{Lemma}

\newtheorem{definition}[theorem]{Definition}

\newenvironment{proofof}[1]{\begin{trivlist} \item {\bf Proof
			#1:~~}}
	{\qed\end{trivlist}}
\renewenvironment{proofof}[1]{\par\medskip\noindent{\bf Proof of #1: \ }}{\hfill$\Box$\par\medskip}

%---------------------------------------------------------------------------------------------------------------------------------------------------

\newcommand{\COMMENTED}[1]{{}}

\newcommand{\sav}{\textsc{SAV}\xspace}
\newcommand{\ig}{\textsc{IG}\xspace}
\newcommand{\rig}{\textsc{RIG}\xspace}

%---------------------------------------------------------------------------------------------------------------------------------------------------

%\newenvironment{proofof}[1]{\begin{proof}[of {#1}]}{\end{proof}}

\newcommand{\eps}{\ensuremath{\varepsilon}}

\newcommand{\ex}[1]{\mathbb{E}\left[#1\right]}

\newcommand{\E}{\mathbb{E}}
\newcommand{\cavg}{c_{\text{avg}}}
\newcommand{\favg}{f_{\text{avg}}}

\newcommand{\p}{\ensuremath{\mathbb{P}}}
\newcommand{\R}{\ensuremath{\mathbb{R}}}

\newcommand{\pigreedy}{\pi_\text{greedy}}

\newcommand{\dom}{{\sf dom}}

\DeclareMathOperator*{\argmax}{arg\,max}
\DeclareMathOperator*{\argmin}{arg\,min}

\newcommand{\1}{{\mathbbm{1}}}

%---------------------------------------------------------------------------------------------------------------------------------------------------

%%%% HOSSEIN: 
%\renewcommand{\sim}{\succcurlyeq}
\newcommand{\sub}{\preccurlyeq}
\renewcommand{\p}{p}

\title{Adaptivity in Adaptive Submodularity}

\author{Hossein Esfandiari \\Google Research\and Amin Karbasi\\ Yale University \and Vahab Mirrokni\\Google Research}

\date{}

\begin{document}\sloppy
	\maketitle
\begin{abstract}
Adaptive sequential decision making is one of the central challenges in machine learning and artificial intelligence. In such problems,  the goal is to design an interactive policy that plans for an action to take, from a finite set of $n$ actions, given some partial observations. It has been shown that in many applications such as active learning, robotics,  sequential experimental design, and active detection, the utility function  satisfies adaptive submodularity, a notion that generalizes  the notion of diminishing returns to policies. 
In this paper, we revisit the power of adaptivity in maximizing an adaptive 
monotone submodular function. We propose an efficient semi adaptive policy that with 
$O(\log n \times\log k)$ adaptive rounds\footnote{We also refer to an adaptive 
round as a batch query and use these two terms interchangeably throughout the 
paper.} of observations can achieve an almost tight $1-1/e-\eps$ approximation 
guarantee with respect to an optimal policy that carries out $k$ actions in a fully 
sequential manner. To complement  our results, we also show that it is 
impossible to achieve a constant factor approximation with $o(\log n)$ adaptive 
rounds. We also extend our result to the case of adaptive stochastic minimum 
cost coverage where the goal is to reach a desired utility $Q$ with the cheapest 
policy. 
We first prove the conjecture of the celebrated work of Golovin and Krause \cite{golovin11} by showing that the greedy policy achieves the asymptotically tight logarithmic approximation guarantee without resorting to stronger notions of adaptivity. We then propose a semi adaptive policy that provides the same guarantee in polylogarithmic adaptive rounds through a similar information-parallelism scheme. Our results shrink the adaptivity gap in adaptive submodular maximization by an exponential factor.

\end{abstract}
\sloppy
\section{Introduction}

Adaptive stochastic optimization under partial observability is one of the fundamental challenges 
in 
artificial 
intelligence and machine learning with a wide range of applications, including active 
learning 
\cite{dasgupta2008hierarchical}, 
optimal 
experimental design \cite{sebastiani1997bayesian},
interactive recommendations \cite{ICML2012Karbasi_455}, viral marketing 
\cite{singer2016influence}, adaptive influence maximization 
\cite{tong2016adaptive}, active detection \cite{chen2014active}, Wikipedia link 
prediction 
\cite{mitrovic2019adaptive}, 
and 
perception 
in 
robotics \cite{javdani2014near}, to name a few. In such problems, one needs to 
adaptively 
make a sequence of 
decisions while taking into account the stochastic observations collected in 
previous rounds. For instance, in active learning,  the goal is 
to 
learn a 
classifier by carefully requesting as few labels as possible 
from a set of unlabeled data points.  Similarly, in experimental design, a practitioner may 
conduct a series of tests in order to reach a conclusion. 

Even though it is possible to determine all the selections 
ahead of time before any observations take place (e.g., 
select all the data points at once or conduct all the medical 
tests simultaneously), so called 
\textit{a priori selection},  it is more efficient to consider a 
\textit{fully adaptive}  
procedure that exploits the information obtained from past 
selections in order to make a new selection. Indeed, a priori 
and fully sequential selections are simply two ends of a 
spectrum. 
In this paper, we develop a semi-adaptive policy that enjoys the power of a fully sequential procedure while performing exponentially fewer adaptive rounds compared to previous work. In particular, we only need poly-logarithmic number of rounds for both adaptive stochastic submodular 
maximization and adaptive 
stochastic minimum cost coverage problems. In the following, we will state these problems more formally, and then present our results in more details.

\subsection{Notations}
We mostly follow the notation used by Golovin and Krause \cite{golovin11}. Let the \textit{ground set} $E = \{e_1, \dots, e_n\}$ be a finite set of 
elements (e.g., tests in medical diagnostics, data points in active learning). Each 
element $e\in E$ is associated with a random variable $\Phi(e)\in \Omega$ where $\Omega$ is the 
set of 
all possible outcomes. A realization of the random  variable $\Phi(e)$ is 
denoted by 
$\phi(e)\in\Omega$. Note that a realization $\phi: E\rightarrow\Omega$ is simply a function  from the ground set $E$ to the outcomes $\Omega$. For the ease of notation, we can also  represent  $\phi$ as a relation $\{(e,\omega): \phi(e)=\omega, \forall e\in E \}$. For instance, in medical diagnosis, the element $e$ may represent a test, such as
the blood pressure, and 
$\Phi(e)$ its outcome, such as, high or low. Or in active learning, an item $e$ may represent  an 
unlabeled data point 
and 
$\Phi(e)$ may represent its label.  We assume that there is a prior probability 
distribution  $\p(\phi) = \p(\Phi=\phi)$ over realizations $\phi$. This probability distribution encodes 
our 
uncertainty about the outcomes as well as their dependencies. In its simplest form, the outcomes maybe 
independent and the distribution $\p$ completely factorizes. The product distribution may very well be a valid model in the sensor placement scenario 
where sensors may fail to work independent of one another \cite{asadpour2016maximizing}. However, in many practical 
settings, such as medical diagnosis and active learning,  the underlying distribution may not factorize and the outcomes may depend on each other. 

In this paper, we consider adaptive strategies for picking elements where based on our observations so far, we sequentially pick an item $e$ and observe its associated outcome $\Phi(e)$. The set of observations made so far can be represented by a \textit{partial realization} $\psi = \{(e,\omega): \psi(e)=\omega\}\}\subseteq E\times \Omega$. We use $\dom(\psi) = \{e: \exists \omega \text{ s.t } (e,\omega)\in \psi\}$ to denote the domain of $\psi$.  We say that $\psi$ is a \textit{subrealization} of $\psi'$, and denoted by $\psi\sub\psi'$, if $\dom(\psi)\subseteq \dom(\psi')$ and $\forall e\in \dom(\psi)$ we have $\psi(e) = \psi'(e)$. Similarely, a partial realization $\psi$ is \textit{consistent} with a realization $\phi$, and denote by $\psi \sub \phi$, if they agree everywhere in the domain of $\psi$.  We take a Bayesian approach and assume that after observing $\phi$, we can compute the posterior distribution $\p(\phi|\psi) = \p(\Phi=\phi|\psi\sub\Phi)$.

%By overloading the notation, we  use $\Phi$ to 
%denote the set of random variables indexed by the elements of $E$ and $\phi$ their realizations. In particular,  we refer to $\phi$ as a realization.

%Similarly, for a subset of elements $A\subseteq E$ we use $\Psi_A$ to 
%denote the set of random variables indexed by the elements of $A$ and $\psi_A$ their realizations. We refer to $\psi_A$ as a partial realization. We define $\dom(\psi_A)=A$ and drop $A$ from the index when it is clear from the context.

%By overloading the notation, we  use $X_A$, for any subset $A\subseteq E$,  to denote the set of random variables indexed by the elements of $A$ and $x_A$ their realizations. 
%We assume that there is a joint probability 
%distribution  $\p(\Phi)$ over the set of random variables $\Phi$. This probability distribution encodes 
%our 
%uncertainty about the outcomes as well as their dependencies. In its simplest form, the outcomes maybe 
%independent and the distribution $\p$ completely factorizes, i.e., a product distribution over the set of 
%random variables $\Phi$. The product distribution may very well be a valid model in the sensor placement scenarios 
%where sensors may fail to work independent of one another. However, in many practical 
%settings, such as medical diagnosis and active learning,  the underlying distribution may not factorize and the outcomes may depend on each other. 

A \textit{policy}
$\pi: 2^{E\times \Omega} \rightarrow E$ is a partial 
mapping from partial observations $\psi$ to elements $E$, stating which element $e\in E$  to select next\footnote{Golovin and Krause \cite{golovin11} originally defined a policy as follows $\pi: 2^{E}\times {\Omega}^{E} \rightarrow E$. However, in subsequent works \cite{chen2013near, chen2015sequential}, the less restrictive form, the one we consider  in this paper, is used. }. Note that any deterministic policy can be visualized by a decision tree. In the proofs we also make use of two notions related to policies, namely, truncation and concatenation \cite{golovin11}. Given a policy $\pi$, we define the \textit{level-$k$-truncation} $\pi_{[k]}$  by running $\pi$ until it terminates or until
it selects $k$ items. Given two policies $\pi_1$ and $\pi_2$, we define \textit{concatenation} $\pi_1 @ \pi_2$ as the policy obtained by first running $\pi_1$ to completion, and then running policy $\pi_2$ as if from a fresh start, ignoring the information gathered during the running of $\pi_1$.     The 
utility of a set of 
observations $\psi$ is specified  through a utility function $f:2^{E\times \Omega} \rightarrow \R_{+}$.
%
% where 
%clearly  $f(\psi_A)$ depends on the realization of the random variable $\psi_A$ and the chosen set of 
%elements 
%$A$.
 The expected utility of a policy is then defined as 
$$f_{avg}(\pi)  = \E[f(S(\pi,\Phi), \Phi)] = \sum_{\phi} \p(\phi)f(S(\pi,\phi),\phi),$$
where the expectation is taken  with respect to $p(\phi)$.  Throughout the paper  $S(\pi,\phi)$ denotes the set of elements selected by policy $\pi$ under  realization $\phi$. Without 
any structural assumptions, it is known that finding an optimal  policy, the one that maximizes the expected 
utility,
 is notoriously 
hard as  in many cases the utility functions are  
computationally 
intractable
\cite{papadimitriou1987complexity}. 

Adaptive submodularity \cite{golovin11}, a  generalization of diminishing returns property from sets 
to policies, is a 
sufficient condition under which a partially observable stochastic optimization problem admits 
(approximate) tractability. This condition ensures that the \text{expected} marginal benefit 
associated with any particular  selection never increases as we make more
observations. More formally, we define the \textit{conditional expected marginal benefit} $\Delta(e|\psi)$ of an item $e$ conditioned 
on 
observing the partial realization $\psi$ as follows:
$$\Delta(e|\psi) \doteq\E[f(\dom(\psi)\cup \{e\}, \Phi) - f(\dom(\psi), \Phi)|\psi\sub \Phi] = \sum_{\omega}\p(\Phi(e) = \omega|\psi) [f(\psi\cup \{e,\omega\})-f(\psi)]$$ 
%
%Similarly, the conditional expected marginal utility of a policy $\pi$ is defined as
%$$\Delta(\pi|\psi) = \E[f(\dom(\psi)\cup \{e\}, \Phi) - f(\dom(\psi), \Phi)|\Phi\sim\psi]$$
%
%
%We say that $\psi_A$ is a subrealization of $\psi_B$, denoted by $\psi_B \succcurlyeq \psi_A$, if $A\subseteq B$ and for all $e\in A$, $\psi_A(e) = \psi_B(e)$. 
The utility function $f$ is \textit{adaptive submodular} if for all subrealizations 
$\psi 
\sub \psi'$, and all $e\in E\setminus \dom(\psi')$, we have $$\Delta(e|\psi)\geq \Delta(e|\psi').$$ Moreover, we say that 
the utility function $f$ is \textit{adaptive monotone} if for all subrealizations $\psi$, and all 
$e\notin \dom(\psi)$ we 
have $\Delta(e|\psi)\geq 0$.

Whenever we use expectation notation $\E[\chi]$ for a random variable $\chi$, 
the expectation is over all randomness of $\chi$, unless specified otherwise. 
Moreover, note that we always use capital letters for random variables, and small 
letters for realizations. For example $\Psi$ refers to a random variable, and 
$\psi$ refers to a realization of $\Psi$, and hence $\psi$ is a deterministic 
quantity.
%
% More formally, let $E = \{e_1, \dots, e_n\}$ be a finite set of actions/items. Each 
%action $e\in E$ is associated with a random variable $X_e\in \Omega$ where $\Omega$ is the 
%set of 
%all possible outcomes. A realization of the random  variable $X_e$ is denoted by 
%$x_e\in\Omega$. For instance, in medical diagnosis, an action $e$ may represent a test and 
%$X_{e}$ its outcome. Or in active learning, an item $e$ may represent  an unlabeled data point 
%and 
%$X_e$ its label.  More generally, we  use $X_A$, for any subset $A\subseteq E$,  to 
%denote the set of random variables indexed by the elements of $A$ and $x_A$ their realizations. 
% We assume that there is a joint probability 
%distribution  $\p$ over the set of random variables $X_E$. 

\subsection{Problem Formulation}

The general goal in adaptive stochastic optimization is to develop policies that can maximize the 
expected utility while minimizing the cost of running the policy. One way to formalize it is 
through the \textit{adaptive stochastic submodular maximization} problem where we aim to maximize the expected utility 
subject to a cardinality constraint, i.e., 
$$\pi^* = \argmax_{\pi} f_{avg}(\pi) \qquad \text{s.t.} \qquad |S(\pi, \phi)|\leq k  \qquad\text{ whenever } 
\p(\phi)>0.$$
It is known that when the utility is adaptive submodular and adaptive monotone, the  
greedy policy, shown in Algorithm~\eqref{algo:greedy},
achieves the tight $(1-1/e)$ approximation ratio with respect to the optimal policy 
\cite{golovin11}. This 
result has lead to a surge of applications in decision making problems that are amenable to 
myopic optimization 
such as active learning \cite{golovin2010near}, 
interactive 
recommender systems \cite{karbasi2012comparison}, value of information 
\cite{chen2015submodular}, and active object detection \cite{chen2014active}, to name 
a few. 
\begin{algorithm}[H]
	\centering
	\caption{Adaptive Greedy Policy $\pigreedy$ \cite{golovin11}\label{algo:greedy} for Adaptive Stochastic Maximization} 
	\begin{algorithmic}[1]
		\Require Ground set $E$, size $k$, distribution $p(\phi)$, function $f(\cdot)$
		% \Procedure{\bmm}{}
		\State {\bf initialize}  $A\leftarrow \emptyset$, $\psi\leftarrow \emptyset$
		\For{$i=1$ to $k$}
		\State $e^* = \arg\max_{e\in E\setminus A} \Delta(e|\psi)$
		\State $A\leftarrow A\cup\{e^*\}$
		\State $\psi\leftarrow \psi \cup\{(e^*, \Phi(e^*))\}$
		\EndFor
		\State \Return A
		
		% \EndProcedure
	\end{algorithmic}    
\end{algorithm}

An 
alternative 
formalization is through \textit{adaptive stochastic minimum cost coverage} where we prespecify a 
quota $Q$ of utility to achieve, and aim to find a policy that achieves it with the cheapest policy, i.e.,  
$$ \pi^* = \argmin_{\pi} \cavg (\pi) \qquad \text{s.t.} \qquad f_{avg}(\pi)\geq Q \qquad\text{ whenever 
} 
\p(\phi)>0, $$
where $\cavg (\pi)= \E_{\p}[|S(\pi, \phi)|] $ is the expected number of actions a policy $\pi$ selects. We can also consider  a slightly more general setting where each item $e$ has a non-negative cost $c(e)$ and replace $\cavg (\pi)= \E_{\p}[c(S(\pi, \phi))]$ where $c(S) = \sum_{e\in S} c(e)$. 
Unlike the adaptive stochastic submodular maximization problem, the performance of the greedy policy, shown in Algorithm~\eqref{algo:greedycoverage}, is unknown for 
the above problem unless one makes strong assumptions about the distribution or the utility 
function. One of the contributions of this paper is to resolve this issue.
\begin{algorithm}[H]
	\centering
	\caption{Adaptive Greedy Policy $\pigreedy$ \cite{golovin11}\label{algo:greedycoverage} for Adaptive Stochastic Minimum Cost Coverage} 
	\begin{algorithmic}[1]
		\Require Ground set $E$, quota $Q$, cost function $c(\cdot)$, distribution $p(\phi)$, function $f(\cdot)$
		% \Procedure{\bmm}{}
		\State {\bf initialize}  $A\leftarrow \emptyset$, $\psi\leftarrow \emptyset$
		\While{$f(A,\psi)<Q$}
		\State $e^* = \arg\max_{e\in E\setminus A} \Delta(e|\psi)/c(e)$
		\State $A\leftarrow A\cup\{e^*\}$
		\State $\psi\leftarrow \psi \cup\{(e^*, \Phi(e^*))\}$
		\EndWhile
		\State \Return A
		
		% \EndProcedure
	\end{algorithmic}    
\end{algorithm}

\subsection{Our Contributions}

Fully sequential policies benefit from previous observations in order to make informed decisions. 
In many scenarios,  however, it is more effective (and sometimes the only way) to select multiple 
elements in parallel and  observe their realizations together. Examples include crowdsourcing (where 
a single task consists of a collection of unlabeled data to be labeled altogether), multi-stage viral marketing (where in each stage 
a subset of nodes are chosen as seed nodes), batch-mode pool-based active learning (where the label of a set of data 
points are requested simultaneously), or medical diagnosis (where there is a shared cost among experiments). 
A batch-mode,  semi-adaptive policy is a mix of  a priori and fully sequential selections.  
The focus of this paper is to answer the following question in the context of adaptive stochastic 
optimization:
\begin{quote}
How many adaptive rounds of observations are needed in order to be competitive  to  an optimal 
and 
fully sequential policy?
\end{quote}
%
%
% Even though there exists effective heuristics for batch-mode adaptive 
%submodular maximization \cite{ }, such as greedily and in an off-line fashion selecting batches 
%of 
%fixed sizes,  there is little known 
%about their theoretical performance with respect to the fully sequential policy beyond the 
%product 
%distribution. 
We answer the above question in the context of adaptive submodularity.
%
% in both  adaptive 
%stochastic coverage setting and adaptive stochastic minimum cost coverage 
%setting. For the adaptive 
%stochastic coverage, we develop \algpar, a policy that is competitive to an optimal and fully sequential 
%policy  but rather 
%surprisingly   performs 
%exponentially 
%fewer 
%sequential rounds. Crucially, \algpar  does not commit to  fixed size batches of items in each 
%round. In contrast, it 
%determines
%the 
%number of elements to select  based on the information gathered 
%from 
%the previous 
%observations. Indeed, we show that any policy that commits  to batches of fixed size $L$ may suffer ... We 
%then extend the information-parallelism framework to the adaptive stochastic minimum cost coverage 
%setting and develop \algparcov, a policy that is competitive to an optimum sequential policy while 
%performing exponentially fewer sequential rounds. We also complement our result by showing a lower 
%bound on the number of sequential adaptive rounds. 
In this paper, we consider two  adaptive stochastic optimization problems, namely, adaptive stochastic maximization  
 and 
adaptive stochastic minimum cost cover. We re-exam the required amount 
of adaptivity in order to be competitive to the optimal and fully sequential policy. In particular, we 
show the following results in the information-parallel stochastic optimization when the utility function 
is adaptive submodular and adaptive monotone.
\begin{itemize}
	\item For the adaptive stochastic submodular 
	maximization problem, we develop a semi adaptive policy %\algpar 
	that with $O(\log(n)\log(k))$ adaptive rounds (a.k.a., batch queries) achieves 
	the  tight 
	$(1-1/e-\eps)$ approximation guarantee with
	respect to the optimum policy $\pi^*$ that selects $k$ items fully sequentially, i.e., $\favg(\pi)\geq 
	(1-1/e-\eps)\favg(\pi^*)$. 
	\item We complement the above result by showing that no policy can achieve a constant factor 
	approximation guarantee with fewer than $o(\log (n))$ adaptive rounds. Moreover, the 
	approximation guarantee of any semi adaptive policy that chooses batches of fixed size 
	$r$ will degrade with a factor of $O(r/\log^2(r))$.
	\item For the adaptive stochastic minimum cost coverage problem, we show that the 
	greedy policy achieves an asymptoticly tight logarithmic approximation guarantee, effectively proving 
	\cite{golovin11}'s conjecture. More precisely, we show that $\cavg(\pigreedy)\leq 
	(\cavg(\pi^*) +1)\log\left(\frac{n 
	Q}{\eta}\right)+1$ where we make the common
	assumption that there is a value $\eta$ such 
	that $f(\psi)>Q-\eta$ implies that $f(\psi)=Q$ for all partial realizations $\psi$.
	\item We also develop a semi adaptive policy %\algparcov 
	for the the adaptive 
	stochastic minimum cost coverage problem that achieves 
	the same logarithmic approximation guarantee 
	with $O\big(\log n \log (Qn/\eta)\big)$ adaptive rounds.
\end{itemize}

\section{Related Work}
Submodularity captures an intuitive diminishing returns property where the gain of adding an 
element 
to a set decreases as the set gets larger. More formally, a non-negative set function $f: 2^{V} 
\rightarrow \R_{+}$ is \textbf{submodular} if for all sets $A \subseteq B \subset V$ and 
every element $e \in  V \setminus B$, we have $$f(A \cup \{e\}) - f(A) \geq f(B \cup \{e\}) - f(B).$$ 
Submodular maximization has  found numerous applications in machine learning and 
artificial intelligence \cite{tohidi2020submodularity}, including neural network interpretation \cite{elenbergDFK17}, data 
summarization \cite{lin2011class}, crowd teaching \cite{singla2014near}, privacy 
\cite{mirzasoleiman2017deletion}, 
fairness \cite{celis2016fair}, 
and adversarial attacks in deep neural nets 
\cite{lei2018discrete}. Moreover, in many information gathering and sensing  scenarios, 
the 
objective functions 
satisfy submodularity \cite{krause08efficient, wei2015submodularity, guillory2012active}. 
However, the classic 
notion of submodularity 
falls short in  interactive 
information acquisition settings as it requires the decision maker to commit 
to all of her selections 
ahead of time, in an open-loop fashion \cite{guillory10interactive}. 

To circumvent this issue, \cite{golovin11} proposed adaptive 
submodularity, a generalization of submodularity from sets 
to 
policies. Like submodularity, adaptive submodularity is a sufficient 
condition that ensures tractability in adaptive settings. More 
precisely, in the adaptive stochastic submodular maximization problem, when the 
objective function is adaptive monotone and 
adaptive submodular,  the greedy policy achieves the tight 
$(1-1/e)$ approximation guarantee with respect to an optimum 
policy \cite{golovin11}. More generally, \cite{gotovos2015non} proposed a random greedy 
policy that not only retains the aforementioned $(1-1/e)$ 
approximation ratio 
in the monotone setting, but also  
provides a $(1/e)$ approximation ratio for the non-monotone adaptive 
submodular functions.  

The results for adaptive stochastic 
minimum cost coverage problem are much weaker. Originally,  
\cite{golovin11} claimed that the greedy policy also achieves a  
logarithmic approximation factor but as pointed out by \cite{nan2017comments} the proof 
was flawed.  Instead, under stronger 
conditions, namely, strong adaptive submodularity and strong adaptive 
monotonicity, Golovin and Krause proposed a new proof, with a  
squared-logarithmic factor approximation, i.e.,  $\cavg(\pigreedy)\leq 
\cavg(\pi^*) \left(\log\left(\frac{n 
	Q}{\eta}\right)+1\right)^2$. Note that there are some fundamental technical differences between the notions of submodularity and adaptive submodularity. For example, submodularity preserves under truncation, that is, if a function $f(\cdot)$ is submodular, for any constant $c$, $\min\big(f(\cdot),c\big)$ is submodular as well. This comes very handy in designing algorithms for submodular functions and often used as a simple way to reduce minimum cost coverage to submodular maximization. However, unfortunately, truncation does not preserve adaptive submodularity (See Appendix~\ref{appx:trunc}) and thus all the previous attempts to use this reduction are futile. In this paper, we prove the original 
	conjecture of Golovin and Krause \cite{golovin11} and show that under adaptive submodularity (without 
	resorting to stronger conditions), the greedy policy achieves a  
logarithmic approximation factor, namely, $\cavg(\pigreedy)\leq 
(\cavg(\pi^*) +1)\log\left(\frac{n 
	Q}{\eta}\right)+1$.

The main focus of this paper is to explore the  information parallelism, a.k.a.,
batch-mode, stochastic optimization \cite{gupta2017adaptivity, goemans2006stochastic}. Many active learning 
problems naturally fall into this setting when it is more 
cost-effective to request labels in large batches, rather than 
one-at-a-time (for detailed discussions, we refer the interested reader to \cite{chen2017near}). Note that the two extremes of batch-mode 
stochastic optimization are 
full batch setting (i.e., all selections are done in a single batch, and hence the batch-mode setting 
reduces to the non-adaptive, open-loop optimization problem) and 
full sequential setting (i.e., elements are selected one-by-one in a closed-loop manner where each 
selection is based on the results of all previous 
selections). In this paper, we lay out a rigorous foundation for the semi-adaptive setting 
where elements 
are 
selected in a sequential and closed-loop way but with multiple selections at each round. 

There are a 
few partial results regarding the semi-adaptive policy for the adaptive stochastic minimum 
cost coverage problem. In particular, \cite{chen2013near} proposed a  policy that selects 
batches of fixed size $r$ and proved that under strong adaptive submodularity and strong 
adaptive monotonicity, this  policy achieves a poly-logarithmic 
approximation  to an optimal policy that is also constrained to picking up
batches of size $r$. Note that this result does not provide any guarantees with respect to 
the actual baseline, namely, the optimal and fully sequential policy. 
Moreover,  \cite{chen2017near} showed that this policy has a sublinear-approximation\footnote{Unfortunately, the approximation factor grows polynomially in $r$. Moreover, note that this result assumes strong adaptive submodularity and strong adaptive monotonicity.} guarantee against the fully sequential policy.
In fact, we show that for 
the adaptive stochastic submodular maximization problem, the approximation factor of a fixed batch-policy 
suffers by at least a factor of $\log^2(r)/r$ in the worst case, so unless $r$ is a fixed 
constant, no constant factor approximation guarantee is possible. 

Back to the adaptive 
stochastic 
minimum cost coverage, when the distribution $\p$ is fully factorized (i.e., the outcomes 
are independent),
\cite{agarwal2019stochastic} very recently  showed 
that there exists a policy that, using $O(\log(Q)/\log\log(Q))$ rounds of adaptivity, 
achieves a poly-logarithmic approximation to the optimal sequential policy. 
In this paper, we propose %\algparcov, 
a (batch-mode) semi adaptive policy that, using only polylogarithmic adaptive rounds, achieves an asymptotically tight logarithmic 
approximation to the fully sequential policy for general  adaptive monotone submodular functions 
(we do not need to resort to stronger notions of adaptivity and monotonicity). 

To the best of our knowledge, no results are known for semi-adaptive policies for the  adaptive 
stochastic submodular maximization problem. We  develop 
%\algpar, 
a  (batch-mode) semi adaptive policy that achieves an almost tight 
$1-1/e-\eps$ approximation guarantee with only polylogarithmic adaptive rounds. We also complement our 
result by showing that no semi-adaptive policy can achieve a constant factor approximation to the 
optimal policy by fewer than $o(\log(n))$ adaptive rounds. 

Our work is also related to the adaptivity complexity of  submodular maximization, which refers 
to the number of parallel rounds required to achieve a constant factor approximation guarantee in 
the offline, open-loop setting.  \cite{balkanski2018adaptive} developed a parallel algorithm that 
$O(\log n)$ rounds  finds a solution with an 
approximation arbitrarily close to $\frac{1}{3}$ which was soon improved to $(1 - 
\frac{1}{e} - \epsilon)$-approximation \cite{fahrbach2018submodular, 
balkanski2018exponential,ene2018submodular}. The adaptivity complexity was also studied in the 
non-monotone submodular maximization \cite{balkanski2018nonmonotone, fahrbach2018nonmonotone, 
chen2018unconstrained}, convex minimization \cite{balkanski2018parallelization, diakonikolas2020lower,bubeck2019complexity} and multi-armed bandit \cite{gao2019batched, esfandiari2019batched}. We lift the notion of adaptivity complexity from the offline optimization  to  
the interactive 
setting where instead of parallelizing the optimization steps we parallelize the information 
acquisition.

\section{Greedy Versus Optimum}

Throughout the paper,  we assume that the utility function  $f$ is an adaptive 
monotone and adaptive submodular function with respect to the distribution 
$p(\phi)$. 
In this section, we show how the expected utility obtained by the greedy policy $\pigreedy$ is related to the expected utility obtained by the optimum policy $\pi^*$. Let us define $\pi^{\tau}$ to be the policy that runs the greedy policy $\pigreedy$ and stops when the expected marginal gain of all of the remaining elements is less than or equal to $\tau$. We define $\tau_i$ to be a threshold such that the expected number of elements selected by $\pi^{\tau_i}$ is $i$, in other words $\sum_{\phi}p(\phi)|S(\pi^{\tau_i},\phi)|=i$.\footnote{Use an arbitrary tie breaking rule to make it exactly equal. For example, if by accepting elements whose expected marginal benefit is strictly larger than $\tau$ the greedy policy selects $\alpha<i$ elements in expectation, and by accepting elements with expected marginal  benefit larger than or equal to $\tau$ the greedy policy selects $\beta>i$ elements in expectation, then we let the policy $\pi^{\tau_i}$ accepts $\tau$ with probability $\frac{i - \alpha}{\beta - \alpha}$. Hence, the expected number of items accepted by $\pi^{\tau_i}$ is $\alpha + \frac{i - \alpha}{\beta - \alpha} (\beta-\alpha) = i$.} 
Also remember that for two policies $\pi$ and $\pi'$ we define $\pi@\pi'$  to be 
a policy that first runs $\pi$ and then runs $\pi'$ from a fresh start (i.e., ignoring 
the information gathered by $\pi$). This definition implies 
$S(\pi@\pi',\phi)=S(\pi,\phi)\cup S(\pi',\phi)$.
%\textcolor{red}{(what is the difference between $\phi$ and $\psi$). I think we used $\phi$ for realizations and $\psi$ for partial realizations. But then in the text they are not consistently used. There is a difference between partial realizations and realizations}.
The following is the key lemma of this paper.
\begin{lemma}\label{lm:1-1/e^l}
	For any policy $\pi^*$ and any positive integer $\ell$ we have 
	\begin{align*}
	f_{avg}(\pi^{\tau_{\ell}}) > \big(1-e^{-\frac{\ell}{\ex{K}+1}}\big) f_{avg}(\pi^*),
	\end{align*}
	where $K$ is a random variable that indicates the 
	number of items picked by $\pi^*$, i.e. $K=|S(\pi^*, \Phi)|$. 
\end{lemma}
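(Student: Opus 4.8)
The plan is to run a greedy-versus-optimum recursion on the thresholded policies $\pi^{\tau_i}$, driven by two inequalities: a \emph{global} bound comparing $\pi^*$ against any thresholded policy, and a \emph{local} bound on the per-step gain of greedy. Writing $F_i = f_{avg}(\pi^{\tau_i})$, with $\tau_0 = \infty$ and $F_0 = f(\emptyset) \ge 0$, the goal is to show that each extra expected unit of selection shrinks the optimality gap $f_{avg}(\pi^*) - F_i$ by a factor $1 - \tfrac{1}{\ex{K}+1}$.

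First I would establish the global bound: for every threshold $\tau$,
\[
  f_{avg}(\pi^*) \;\le\; f_{avg}(\pi^\tau) + \ex{K}\,\tau.
\]
To prove it I would consider the concatenation $\pi^\tau @ \pi^*$. Adaptive monotonicity gives $f_{avg}(\pi^\tau @ \pi^*) \ge f_{avg}(\pi^*)$, since appending a policy can only increase expected utility. For the complementary upper bound $f_{avg}(\pi^\tau @ \pi^*) - f_{avg}(\pi^\tau) \le \ex{K}\,\tau$, I would invoke the standard concatenation inequality for adaptive submodular functions: the expected gain of running $\pi^*$ from a fresh start after $\pi^\tau$ is at most the expected sum, over the elements $\pi^*$ selects, of their conditional marginal benefit $\Delta(e|\Psi)$ evaluated at the partial realization $\Psi$ already observed by $\pi^\tau$. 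By the stopping rule defining $\pi^\tau$, every $e \notin \dom(\Psi)$ satisfies $\Delta(e|\Psi) \le \tau$ at each leaf $\Psi$, and adaptive submodularity ensures these marginal benefits only shrink as $\pi^*$ accumulates further observations. Summing over the at most $K$ elements of $\pi^*$ and taking expectations yields the $\ex{K}\,\tau$ bound.

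Second I would establish the local bound $F_i - F_{i-1} \ge \tau_i$. Since a smaller threshold selects more elements, $\pi^{\tau_{i-1}}$ is a prefix of $\pi^{\tau_i}$, and the additional elements selected in passing from threshold $\tau_{i-1}$ to $\tau_i$ each have conditional marginal benefit exceeding $\tau_i$ at their selection time. Because the expected utility of a policy equals the expected sum of marginal benefits along its path, and because the expected number of additional elements is $i - (i-1) = 1$ by the definition of the thresholds, this gives $F_i - F_{i-1} \ge \tau_i$. Combining the two bounds at index $i$, namely $f_{avg}(\pi^*) - F_i \le \ex{K}\,\tau_i$ and $\tau_i \le F_i - F_{i-1}$, gives $f_{avg}(\pi^*) - F_{i-1} \le (\ex{K}+1)(F_i - F_{i-1})$, hence $f_{avg}(\pi^*) - F_i \le \bigl(1 - \tfrac{1}{\ex{K}+1}\bigr)\bigl(f_{avg}(\pi^*) - F_{i-1}\bigr)$. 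Iterating from $i=1$ to $\ell$ and using $f_{avg}(\pi^*) - F_0 \le f_{avg}(\pi^*)$ yields $f_{avg}(\pi^*) - F_\ell \le \bigl(1 - \tfrac{1}{\ex{K}+1}\bigr)^{\ell} f_{avg}(\pi^*) < e^{-\ell/(\ex{K}+1)} f_{avg}(\pi^*)$, the final strict step being $1-x < e^{-x}$, which is exactly the claim.

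I expect the main obstacle to be the rigorous justification of the global bound, specifically the concatenation inequality: one must carefully verify that the marginal benefits of $\pi^*$'s selections, taken along the combined run $\pi^\tau @ \pi^*$, are dominated by their values at $\pi^\tau$'s stopping leaf (where they are at most $\tau$), which is precisely where adaptive submodularity---rather than mere monotonicity---is indispensable. A secondary technical point is handling the randomized tie-breaking in the definition of $\tau_i$ so that the ``one extra expected element per step'' and the prefix structure hold exactly; this affects only lower-order bookkeeping and not the structure of the recursion.
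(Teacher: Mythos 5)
Your proposal is correct and follows essentially the same route as the paper's own proof: the same thresholded policies $\pi^{\tau_i}$, the same use of adaptive monotonicity via the concatenation $\pi^{\tau_i}@\pi^*$ to get $f_{avg}(\pi^*) \le f_{avg}(\pi^{\tau_i}@\pi^*) \le f_{avg}(\pi^{\tau_i}) + \ex{K}\tau_i$, the same local bound $\tau_i \le f_{avg}(\pi^{\tau_i}) - f_{avg}(\pi^{\tau_{i-1}})$, and the same contraction recursion $\Delta_i^* \le \bigl(1-\tfrac{1}{\ex{K}+1}\bigr)\Delta_{i-1}^*$ followed by $1-x< e^{-x}$. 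The step you defer to as a ``standard concatenation inequality'' is precisely where the paper expends its effort---an explicit decomposition over the greedy leaf $\psi_i$, the partial realizations $\psi'$ reached by $\pi^*$, the selected element $e$, and the random bits $\theta$ of $\pi^*$, with indicators whose weighted sum equals $\ex{K}$---but the argument you sketch for it (each selection of $\pi^*$ has marginal benefit at most $\tau_i$ by the stopping rule plus adaptive submodularity, summed over an expected $\ex{K}$ selections) is exactly that proof.
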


%\begin{comment}
Before proving Lemma~\ref{lm:1-1/e^l}, we provide some primitives that we use in the proof of this lemma as well as Lemmas~\ref{lm:par:1-1/e^l} and~\ref{lm:batch:1-1/e^l}. For a randomized policy $\pi$, we use $\Theta_{\pi}$ to indicate the random bits of the policy $\pi$. We use $\theta_{\pi}$ to indicate a realization of $\Theta_{\pi}$ and use $p(\theta_{\pi})$ to indicate the probability that $\theta_{\pi}$ is realized. We drop $\pi$ from the notation when it is clear from the context.

For a deterministic policy $\pi$, a (potentially) randomized policy $\pi^*$, an element $e$, and two subrealization $\psi$ and $\psi \sub \psi' $, we define the event $<\psi,\psi',e,\pi,\pi^*>$ to be the event that
\begin{itemize}
	\item $\dom(\psi)\subseteq \dom(\pi)$, meaning that the policy $\pi$ selects all the elements of $\psi$, 
	\item and, the set of elements selected by the policy $\pi@\pi^*$, at some point during its run, coincides exactly with the domain of $\psi'$, 
	\item and, right after the policy $\pi@\pi^*$ selects all the elements of $\psi'$, it chooses $e$.
\end{itemize}

Note that for a fixed $\theta_{\pi^*}$, and conditioned on $\psi\sub\Phi '$, we can simulate $\pi@\pi^*$ and deterministically indicate whether the event $<\psi,\psi',e,\pi,\pi^*>$ happens or not.\footnote{Note that if policy $\pi@\pi^*$ attempts to query an element that does not exist in $\psi'$, we know that the event $<\psi,\psi',e,\pi,\pi^*>$ does not happen and do not need to simulate the policy any further.} We define $\1_{\psi',\psi,e,\pi,\pi^*,\theta_{\pi^*}}$ to be a \emph{deterministic} binary variable that is $1$ if and only if, conditioned on $\Theta_{\pi^*}=\theta_{\pi^*}$ and $\psi\sub\Phi '$, the event $<\psi,\psi',e,\pi,\pi^*>$ happens. We use the shorthand $\1_{\psi',\psi,e,\theta}$ (and drop the notations of policies) when it is clear from the context.

%In the following lemma use the notations $\1$ and $\theta$ for policy $\pi^{\tau_i}@\pi^*$, for a certain $i$, hence for the simplicity of notation we drop the notation of the policy from the subscripts and use $\theta$ and $\1_{\psi,e,\theta}$. In other words, we define $\theta:=\theta_{\pi^{\tau_i}@\pi^*}$ and $\1_{\psi,e,\theta}:=\1_{\psi,e,\pi^{\tau_i}@\pi^*,\theta_{\pi^{\tau_i}@\pi^*}}$.

\begin{comment}
We define $\1_{\pi, e,\psi}$ to be a binary random variable that is $1$ if the event $<\pi, e,\psi>$ happens and is $0$ if the event $<\pi, e,\psi>$ does not happen.  \textbf{Note that}, conditioned on $\Phi\sim \psi$, the value of $\1_{\pi,e,\psi}$ is independent of the randomness of $\Phi$ and solely depends on the randomness of policy $\p$. For example, if $\pi$ is a deterministic policy, conditioned on $\Phi\sim \psi$, the value of $\1_{\pi,e,\psi}$ is deterministic. 

We define the random variable $\gamma(e|\psi)$ to be $f(\psi'\cup \Phi(e))-f(\psi')$ conditioned on $\Phi\sim \psi'$ and $\1_{e,\psi}=1$. Recall that, $\Delta(e|\psi)= \ex{f(\psi\cup \Phi(e))-f(\psi)\Big|\Phi\sim \psi}$. Hence, we have
\begin{align}
\ex{\gamma(e|\psi)}&=
\ex{f(\psi'\cup \Phi(e))-f(\psi')|\Phi\sim \psi' \text{ and } \1_{e,\psi}=1}\nonumber\\
&=\ex{f(\psi'\cup \Phi(e))-f(\psi')|\Phi\sim \psi'}\nonumber\\
&=\Delta(e|\psi).
\end{align}
\end{comment}

\begin{proofof}{Lemma~\ref{lm:1-1/e^l}}
	First we provide an upper bound on $f_{avg}(\pi^*)$. Pick an arbitrary number $i\in \{0,\dots,\ell\}$. Note that by adaptive monotonicity, we have $f_{avg}(\pi^*) \leq f_{avg}(\pi^{\tau_i}@\pi^*)$. Next we show that $f_{avg}(\pi^{\tau_i}@\pi^*) \leq f_{avg}(\pi^{\tau_i})+\ex{K}\Delta^{\tau_i}(\pi)$, where $\Delta^{\tau_i}(\pi)=f_{avg}(\pi^{\tau_{i}})-f_{avg}(\pi^{\tau_{i-1}})$. This implies that 
	\begin{align}\label{eq:main}
	f_{avg}(\pi^*) \leq f_{avg}(\pi^{\tau_i}@\pi^*) \leq f_{avg}(\pi^{\tau_i})+\ex{K}\Delta^{\tau_i}(\pi).
	\end{align}
	
	Note that once we run $\pi^{\tau_i}$ the set of selected elements and their realizations depends on $\Phi$. To capture this randomness, we let $\Psi_i$ be a random variable that indicates the partial realization observed by running policy $\pi^{\tau_i}$. We use $\psi_i$ to indicate a realization of the random variable $\Psi_i$. Note that, 
	by definition of $\pi^{\tau_i}$ we have ${\tau_i}\geq \max_{e\in E\setminus 
		\dom(\psi_i)} \Delta(e|\psi_i)$. Moreover, by adaptive submodularity, for  
all sub-realizations $\psi_i \sub \psi' $, and for all $e\in 
	E\setminus\dom(\psi')$ we have $\Delta(e|\psi_i)\geq \Delta(e|\psi')$. 
	Therefore, we have
	
	\begin{equation}\label{eq:greedE}
	{\tau_i} \geq \Delta(e|\psi'), \quad\quad{\forall{\psi_i \sub \psi' }\text{ and }\forall{e\in 
			E\setminus\dom(\psi')} }.
	\end{equation} 
	Now, we can bound the different between the expected utility obtained by $\pi^{\tau_i}@\pi^*$ and $\pi^{\tau_i}$ as follows:
	\begin{align*}
		&f_{avg}(\pi^{\tau_i}@\pi^*) - f_{avg}(\pi^{\tau_i}) \\
		&= \sum_{\psi_i\in \Psi_i}\sum_{\psi_i \sub \psi' }\sum_{e\notin \dom(\psi')}\sum_{\theta\in \Theta_{\pi^*}} p(\psi')p(\theta)\1_{\psi_i,\psi',e,\theta}\ex{f(\psi'\cup \Phi(e))-f(\psi')|\psi'\sub\Phi  \text{ and } \Theta_{\pi^*}=\theta} &\text{Definition}\\ 
		&= \sum_{\psi_i\in \Psi_i}\sum_{\psi_i \sub \psi' }\sum_{e\notin \dom(\psi')}\sum_{\theta\in \Theta_{\pi^*}} p(\psi')p(\theta)\1_{\psi_i,\psi',e,\theta}\ex{f(\psi'\cup \Phi(e))-f(\psi')|\psi'\sub\Phi} &\hspace{-1cm}\text{Independency\footnotemark}\\ 
		&=  \sum_{\psi_i\in \Psi_i}\sum_{\psi_i \sub \psi' }\sum_{e\notin \dom(\psi')} \sum_{\theta\in \Theta_{\pi^*}} p(\psi')p(\theta)\1_{\psi_i,\psi',e,\theta}\Delta(e|\psi') &\text{Definition}\\
		&\leq \sum_{\psi_i\in \Psi_i}\sum_{\psi_i \sub \psi' }\sum_{e\notin \dom(\psi')} \sum_{\theta\in \Theta_{\pi^*}} p(\psi')p(\theta)\1_{\psi_i,\psi',e,\theta}\tau_i &\text{Inequality \ref{eq:greedE}}\\
		&\leq \sum_{\psi_i\in \Psi_i}\sum_{\psi_i \sub \psi' }\sum_{e\notin \dom(\psi')} \sum_{\theta\in \Theta_{\pi^*}} p(\psi')p(\theta)\1_{\psi_i,\psi',e,\theta}\Delta^{\tau_i}(\pi) & \Delta^{\tau_i}(\pi) \geq \tau_i  \\
		&= \Big(\sum_{\psi_i\in \Psi_i}\sum_{\psi_i \sub \psi'}\sum_{e\notin \dom(\psi')} \sum_{\theta\in \Theta_{\pi^*}} p(\psi')p(\theta)\1_{\psi_i,\psi',e,\theta}\Big)\Delta^{\tau_i}(\pi) \\
		& = \ex{K}\Delta^{\tau_i}(\pi).
	\end{align*} \footnotetext{Note that $\psi'$ is fixed, hence $f(\psi'\cup\Phi(e))-f(\psi')$ is independent of $\Theta$.}
	This proves inequality \eqref{eq:main} as promised.	 Let us define $\Delta_i^*=f_{avg}(\pi^*)-f_{avg}(\pi^{\tau_i})$. Inequality 
	\ref{eq:main} implies that $\Delta_{i}^*\leq 
	\ex{K}(\Delta_{i-1}^*-\Delta_{i}^*)$. By a simple rearrangement we have 
	$$\Delta_{i}^*\leq\big(1-\frac 1 {\ex{K}+1}\big)\Delta_{i-1}^*.$$ By iteratively 
	applying this inequality we have $$\Delta_{\ell}^*\leq\big(1-\frac 1 
	{\ex{K}+1}\big)^{\ell}\Delta_{0}^*\leq e^{-\frac{\ell}{\ex{K}+1}} \Delta_{0}^*.$$ 
	By applying the definition of $\Delta_{i}^*$ and further rearrangements we have 
	$$f_{avg}(\pi^{\tau_{\ell}}) > \big(1-e^{-\frac{\ell}{\ex{K}+1}}\big) 
	f_{avg}(\pi^*),$$ as desired.
\end{proofof} 
Next, we will use Lemma~\ref{lm:1-1/e^l} to prove the conjecture by 
\cite{golovin11} that the greedy policy achieves the asymptotically tight 
logarithmic approximation guarantee.

\section{Adaptive Stochastic Minimum Cost Coverage}\label{sec:setcover}
In this section, we prove that the greedy policy, outlined in Algorithm~\ref{algo:greedycoverage}, achieves a logarithmic approximation guarantee for  adaptive stochastic minimum cost coverage. For the ease of presentation, we focus on the   unit cost case, i.e., $c(e)=1$ for all $e\in E$. The generalization to the non-uniform cost is immediate.

%To prove this theorem we use Lemma~\ref{lm:1-1/e^l} and follow the usual proof for set cover. We present the proof of this theorem in Appendix~\ref{sec:omitted}

\begin{theorem}\label{thm:setcovermain}
	%Assume that there exists a value $Q$ such that $f(E,\phi)=Q$ for all $\phi$ and 
	Assume that there is a value $\eta \in (0,Q]$ such that $f(\psi)>Q-\eta$ 
	implies $f(\psi)=Q$ for all $\psi$. Let $\pi^*$ be an arbitrary 
	policy (including the optimum policy\footnote{One can think 
	of 
	this as an optimal policy that minimizes the expected number of selected 
	items and guarantees that every realization is covered.}) that covers everything 
	i.e., $f(\pi^*)=Q$ for all $\phi$. %\textcolor{red}{(we have to be consistent, $\phi$ or $\psi$?)}. 
	Let $\pigreedy$ be the greedy policy. We have
	\begin{align*}
		{c_{avg}(\pigreedy)} \leq \big(c_{avg} (\pi^*)+1\big)\log\Big(\frac{nQ}{\eta}\Big)+1,
	\end{align*}
	where $n=|E|$.% is the size of the ground set.
\end{theorem}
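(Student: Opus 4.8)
The plan is to instantiate Lemma~\ref{lm:1-1/e^l} with the covering policy $\pi^*$ and a suitably chosen number of rounds $\ell$, and then translate the resulting guarantee on the expected utility of the truncated greedy policy $\pi^{\tau_\ell}$ into a bound on the expected cost of the full coverage greedy $\pigreedy$. First I would observe that since $\pi^*$ covers every realization, $f(\pi^*,\phi)=Q$ for all $\phi$, hence $\favg(\pi^*)=Q$, and moreover $\ex{K}=\cavg(\pi^*)$ by definition of $K$. Choosing $\ell=(\cavg(\pi^*)+1)\log(nQ/\eta)$, Lemma~\ref{lm:1-1/e^l} yields
\[
\favg(\pi^{\tau_\ell}) > \Big(1-e^{-\frac{\ell}{\ex{K}+1}}\Big)\,\favg(\pi^*) = \Big(1-\frac{\eta}{nQ}\Big)Q = Q-\frac{\eta}{n}.
\]

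Next I would convert this expected-utility statement into a bound on the probability that $\pi^{\tau_\ell}$ fails to cover. Writing $Q-\favg(\pi^{\tau_\ell}) = \sum_\phi p(\phi)\big(Q-f(S(\pi^{\tau_\ell},\phi),\phi)\big)$ and invoking the separation assumption---every realization is either already covered (contributing $0$) or satisfies $f\le Q-\eta$ (contributing at least $\eta$)---I obtain $Q-\favg(\pi^{\tau_\ell}) \ge \eta\cdot\prob{\pi^{\tau_\ell}\text{ does not cover }\Phi}$. Combined with the previous display this gives $\prob{\pi^{\tau_\ell}\text{ does not cover }\Phi} < 1/n$.

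Finally I would account for the cost of $\pigreedy$ by comparing it to $\pi^{\tau_\ell}$ followed by a completion phase. Both policies select elements in the same greedy order, so on any realization where $\pi^{\tau_\ell}$ has already reached utility $Q$, the coverage greedy has stopped no later and thus selects at most as many elements as $\pi^{\tau_\ell}$; on the remaining realizations $\pigreedy$ trivially selects at most $n=|E|$ elements in total. Splitting the expectation over these two events and using $\cavg(\pi^{\tau_\ell})=\ell$ together with the failure bound gives
\[
\cavg(\pigreedy) \le \ell + n\cdot\prob{\pi^{\tau_\ell}\text{ does not cover }\Phi} < \ell+1 = (\cavg(\pi^*)+1)\log\Big(\frac{nQ}{\eta}\Big)+1,
\]
as claimed.

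I expect the main obstacle to be this last step: Lemma~\ref{lm:1-1/e^l} controls only the \emph{threshold-truncated} greedy policy and only in expectation, whereas the theorem concerns the \emph{coverage} greedy's cost, so the argument must carefully exploit that the two policies share the same greedy element ordering and that the tiny $1/n$ failure probability can be paid for with the trivial $n$-element completion bound, contributing only an additive $1$. A secondary technical point is that Lemma~\ref{lm:1-1/e^l} is stated for integer $\ell$, so strictly one should take $\ell=\lceil(\cavg(\pi^*)+1)\log(nQ/\eta)\rceil$ (or invoke the probabilistic interpolation used to define $\pi^{\tau_i}$) and verify that the rounding is absorbed into the additive constant; the generalization to non-uniform costs is routine, replacing item counts by weighted costs throughout.
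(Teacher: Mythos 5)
Your proposal is correct and follows essentially the same route as the paper's proof: the same choice $\ell=(\cavg(\pi^*)+1)\log(nQ/\eta)$, the same application of Lemma~\ref{lm:1-1/e^l} combined with Markov's inequality (your explicit $\eta$-separation argument is exactly that), and the same final split of the expected cost into the covered event (cost at most that of $\pi^{\tau_\ell}$, since both policies follow the same greedy ordering) and the failure event (cost at most $n$, paid with probability $1/n$). If anything, your bound $\cavg(\pigreedy)\le \E[c(\pi^{\tau_\ell})]+n\cdot\prob{\text{failure}}$ is slightly more careful than the paper's $(1-1/n)\ell+(1/n)n$ split, which implicitly bounds a conditional expectation, and your remark about rounding $\ell$ to an integer addresses a detail the paper glosses over.
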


\begin{proof}%{Theorem~\ref{thm:setcovermain}}
	Let $K$ be a random variable that indicates the number of items picked by 
	$\pi^*$, i.e., $K=S(\pi^*, \Phi)$. Set $\ell=(\ex{K}+1)  \log (nQ/\eta)$.
	Note that by definition of $\pi^*$ we have $f(\pi^*)=Q$ for all $\phi$, hence we have $f_{avg}(\pi^*) =Q$.
	By Lemma \ref{lm:1-1/e^l} we have
	\begin{align*}
	f_{avg}(\pi^{\tau_{\ell}}) &> \Big(1-e^{-\frac{\ell}{\ex{K}+1}}\Big) f_{avg}(\pi^*) &\text{By Lemma \ref{lm:1-1/e^l}} \\
	&= \Big(1-e^{-\log (nQ/\eta)}\Big) f_{avg}(\pi^*) &\text{Since 
	}\ell=\big(\ex{K}+1\big)  \log (nQ/\eta) \\
	&= \left(1-\frac \eta {n Q}\right)f_{avg}(\pi^*)\\
	&= Q - \frac \eta n. &\text{Since }f_{avg}(\pi^*) =Q
	\end{align*} 
	Recall that by definition we have 
	$\ex{f(\pi^{\tau_{\ell}})}=f_{avg}(\pi^{\tau_{\ell}})=Q - \frac {\eta} n$. 
	Moreover, by adaptive monotonicity we have $  f(\pi^{\tau_{\ell}})\leq  f(\phi) = 
	Q$. Hence by Markov inequality with probability $1-1/n$ we have
	$f(\pi^{\tau_{\ell}})>Q-\eta$.
	%\footnote{Note that if we let $\delta=\min_{\phi} 
	%p (\phi)$, for a deterministic policy having %$f(\pi^{\tau_{\ell}}) > Q-\eta$ with 
	%probability greater than $1-\delta$ implies that we always have 
	%$f(\pi^{\tau_{\ell}}) > Q-\eta$.  This 
	%replaces the factor $n$ in the approximation factor with $\frac{1}{\delta}$. 
	%However, since $\delta$ might be exponentially smaller than $\frac 1 n$, we 
	%find the current statement of the theorem more interesting.}
	By definition of $\eta$ this implies that with probability $1-\frac 1 n$  we 
	have $f(\pi^{\tau_{\ell}})=Q$.
	Therefore, with probability $1-1/n$, the policy $\pi^{\tau_{\ell}}$ reaches 
	the utility $Q$ after selecting  $\ell =\big(\ex{K}+1\big)\log (nQ/\eta)$ items 
	in 
	expectation. Otherwise, $\pigreedy$ picks at most all the $n$ items. Hence the 
	expected number of items that $\pigreedy$ picks is upper bounded by
	\begin{align*}
	\left(1-\frac 1 n\right) \times  \big(\ex{K}+1\big)\log\Big(\frac{nQ}{\eta}\Big)  
	+  \frac 1 n \times n  \leq  \big( c_{avg}(\pi^*)+1\big)  
	\log\Big(\frac{nQ}{\eta}\Big)+1.
	\end{align*}
\end{proof}

With slight modifications to the proof of Theorem \ref{thm:setcovermain} we achieve the following corollary. We provide the proof of this result in the appendix.

\begin{corollary}\label{cr:setcovermain2}
	%Assume that there exists a value $Q$ such that $f(E,\phi)=Q$ for all $\phi$ and 
	Assume that there is a value $\eta \in (0,Q]$ such that $f(\psi)>Q-\eta$ 
	implies $f(\psi)=Q$ for all $\psi$. Let $\pi^*$ be an arbitrary 
	policy that covers everything 
	i.e. $f(\pi^*)=Q$ for all $\phi$. %\textcolor{red}{(we have to be consistent, $\phi$ or $\psi$?)}. 
	Let $\pigreedy$ be the greedy policy. We have
	\begin{align*}
	{c_{avg}(\pigreedy)} \leq \big(c_{avg} (\pi^*)+1\big)\log\Big(\frac{Q}{\delta \eta}\Big)+1,
	\end{align*}
	where $\delta=\min_{\phi} 
	p (\phi)$.% is the size of the ground set.
\end{corollary}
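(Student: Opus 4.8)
The plan is to re-run the proof of Theorem~\ref{thm:setcovermain} almost verbatim, changing only the quantity that controls the probability that the truncated greedy policy leaves a realization uncovered. First I would set $\ell=\big(\ex{K}+1\big)\log\!\big(Q/(\delta\eta)\big)$, rounded up to the nearest integer, where $K=|S(\pi^*,\Phi)|$. Since $\pi^*$ covers everything we again have $f_{avg}(\pi^*)=Q$, so Lemma~\ref{lm:1-1/e^l} gives $f_{avg}(\pi^{\tau_\ell})>\big(1-e^{-\ell/(\E[K]+1)}\big)Q\ge\big(1-\delta\eta/Q\big)Q=Q-\delta\eta$; equivalently the expected deficit satisfies $\ex{Q-f(\pi^{\tau_\ell})}<\delta\eta$.

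The one genuinely new step is how this deficit is cashed in. In Theorem~\ref{thm:setcovermain} the deficit was $\eta/n$ and Markov's inequality only yielded a $1-1/n$ coverage probability, which forced both the $\log n$ factor and the ``pay $n$ in the bad event'' term. Here I would instead show that $\pi^{\tau_\ell}$ covers \emph{every} realization. By the truncation hypothesis any uncovered realization has $f(\pi^{\tau_\ell})\le Q-\eta$ and hence contributes at least $\eta$ to the expected deficit; therefore the total probability of the uncovered realizations is strictly below $\delta\eta/\eta=\delta$. But every realization in the support carries probability at least $\delta=\min_\phi p(\phi)$, so a set of realizations of total probability below $\delta$ must be empty. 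Consequently there is no failure term at all: because greedy coverage follows the same greedy order but halts at the first moment $f=Q$, we have $|S(\pigreedy,\phi)|\le|S(\pi^{\tau_\ell},\phi)|$ on every realization, whence $c_{avg}(\pigreedy)\le\ex{|S(\pi^{\tau_\ell})|}=\ell$, and the rounding of $\ell$ to an integer supplies the additive $+1$.

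The step I expect to be the main obstacle is making the implication ``expected deficit $<\delta\eta\Rightarrow$ no uncovered realization'' airtight in the presence of the randomized tie-breaking used to define $\pi^{\tau_\ell}$. With that internal coin, ``uncovered'' is a priori an event in the joint space of the realization and the coin, whereas the minimum-probability argument needs it to be a property of $\phi$ alone. I would resolve this by passing to the deterministic, tie-inclusive truncation (the policy that also picks the threshold-tie elements): including those elements can only increase $f$, so its uncovered set is a genuine subset of the realizations, still has probability below $\delta$, and is therefore empty. The remaining care is to confirm that $\pigreedy$'s per-realization cost is still dominated so that the clean $c_{avg}(\pigreedy)\le\ell$ survives this substitution, together with the routine checks that $f_{avg}(\pi^*)=Q$ and that $\ex{|S(\pi^{\tau_\ell})|}=\ell$ under the exact-count normalization of $\pi^{\tau_\ell}$.
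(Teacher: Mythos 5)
Your proposal is correct and follows essentially the same route as the paper's own proof: the same choice $\ell=(\ex{K}+1)\log\big(Q/(\delta\eta)\big)$, the same application of Lemma~\ref{lm:1-1/e^l}, and the same punchline that the uncovered event, being determined by $\phi$ alone and having probability below $\delta=\min_\phi p(\phi)$, must be empty, which removes the $1/n$ failure term of Theorem~\ref{thm:setcovermain}. Your extra treatment of the tie-breaking randomization in $\pi^{\tau_\ell}$ is a refinement of a point the paper simply asserts away (it claims the only randomness in $\pi^{\tau_\ell}$ comes from $\Phi$), so it does not amount to a different approach.
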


\section{Semi Adaptive Stochastic Submodular Maximization}\label{sec:batch:kcover}
In this section, we provide a policy for adaptive stochastic submodular 
maximization that makes only $O(\log n \log k)$ batch queries (a.k.a. adaptive 
rounds). We show that our policy provides a $(1-\frac 1 e - \eps)$ approximate 
solution compare to that of the best fully sequential policy. 
Our policy is based on two notions, \emph{semi-adaptive values} and 
\emph{information gap}. We first provide some intuition and notations, and then 
explicitly define these two notions. At any stage of the algorithm, semi-adaptive 
value of an item $e$ is our estimate of the expected value of selecting item $e$. 
We provide these estimations based on the information of the last batch query 
that we carried out  and the set of  items 
that we are deciding to select but not queried yet. Information gap is our estimate 
of 
the accuracy of the maximum semi-adaptive value. We use the information gap to 
balance between the loss on the performance and the number of batch queries 
that we make. 
\begin{algorithm}[H]
	\centering
	\caption{Semi Adaptive Greedy Policy for Adaptive Submodular Maximization\label{alg:semi}} 
	\begin{algorithmic}[1]
		\Require Ground set $E$, size $k$,  distribution $p(\phi)$, function $f(\cdot)$, value $\epsilon>0$
		% \Procedure{\bmm}{}
		\State {\bf initialize}  $A\leftarrow \emptyset$, $\psi'\leftarrow \emptyset$, $i\leftarrow 1$
		\While{$|A|<k$}
		\While{$\ig(i,\psi')\geq (1-\epsilon)$ and $|A|<k$} 
		\State $e^* = \arg\max_{e\in E\setminus A} \sav(e,i, \psi')$
		\State $A\leftarrow A\cup\{e^*\}$
		\EndWhile
		\State $\psi''\leftarrow$ query all elements in $A\setminus\dom(\psi')$
		\State $\psi'\leftarrow \psi' \cup \psi''$
		\EndWhile
		\State \Return A
		
		% \EndProcedure
	\end{algorithmic}    
\end{algorithm}
We iteratively and greedily \emph{select} elements based on their semi-adaptive values. 
We continue this selection non-adaptively until the information gap decreases to 
$(1-\eps)$. When the information gap drops below $(1-\eps)$ we \emph{query} all the 
selected elements. We call this algorithm \emph{semi-adaptive greedy} and is outlined in Algorithm~\ref{alg:semi}. In this 
section, we use $\pi$ to refer to this policy. We refer to the step $i$ of a policy as 
the time it selects the $i$-th element.  We  use 
$\Psi_i^{\pi}$ to refer 
to the 
(random) partial realization up to and including step $i$ of policy $\pi$. Again, note that the partial 
realization we observe by running $\pi$ depends on $\Phi$. We also use $\psi'$ 
to 
refer to the observed
partial realization of items that have been queried so far.  Since policy $\pi$ is  deterministic,
given $\psi'$, we can deterministically indicate the domain of $\Psi_i^{\pi}$. Therefore, $\dom(\Psi_i^{\pi})$ is deterministic and well specified (while the state of items $e\in \dom(\Psi_i^{\pi})\setminus \dom(\psi')$ is random). We are ready to define the semi-adaptive values and the information gap.
\begin{definition}[Semi-Adaptive Value] At any step $i$ of the policy $\pi$, and given the partial realization $\psi'\sub \Psi_i^{\pi}$, the semi-adaptive 
value of an item $e\in E\setminus\dom(\Psi_i^{\pi})$ is defined as follows:
\begin{align*}
\sav(e,i, \psi')\doteq\E_{\psi' \sub \Psi_i^{\pi}} \big[\Delta(e|\Psi_i^{\pi})\big].
\end{align*}
\end{definition}

Note that the semi-adaptive value of an item $e$ is equal to the expected 
marginal gain of $e$ over  all the unknown random realizations (i.e., not 
in 
$\psi'$). 

\begin{definition}[Information Gap] At any step $i$ of the policy $\pi$, and given the partial realization $\psi'\sub \Psi_i^{\pi}$, the information gap is defined as follows.
	\begin{align*}
	\ig(i,\psi')\doteq\frac
	{\max_{e\notin \dom(\Psi_i^{\pi})} \E_{\psi' \sub \Psi_i^{\pi} }\big[ 
	\Delta(e|\Psi_i^{\pi})\big]}
	{\E_{\psi' \sub \Psi_i^{\pi}}\big[ \max_{e\notin \dom(\Psi_i^{\pi})} 
	\Delta(e|\Psi_i^{\pi})\big]}.
	\end{align*}
\end{definition}
Equipped with these definitions, we show next the utility obtained by the semi adaptive greedy policy, shown in Algorithm~\ref{alg:semi}, along with the total number of batch queries.
\subsection{Performance}
Golovin and Krause \cite{golovin11} showed that a fully 
sequential greedy policy $\pigreedy$ achieves $	f_{avg}(\pi_{\text{greedy}[\ell]}) > 
\big(1-e^{-\frac{\ell}{k}}\big) f_{avg}(\pi^*).$ The next lemma bounds the 
performance of our semi adaptive greedy policy in a similar fashion. We use the 
notions of 
semi-adaptive values and the information gap to prove this lemma.  In the following 
lemma, $\pi$ is the semi-adaptive greedy policy and $\pi_{[\ell]}$ is a policy that 
runs $\pi$ and stops if it selects $\ell$ items.

\begin{lemma}\label{lm:par:1-1/e^l}
	Let $\pi$ be the semi-adaptive greedy policy. For any policy $\pi^*$ and positive integer $\ell$ we have 
	\begin{align*}
	f_{avg}(\pi_{[\ell]}) > \big(1-e^{-\frac{\ell}{k}}-\eps\big) f_{avg}(\pi^*).
	\end{align*} 
\end{lemma}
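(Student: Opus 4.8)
The plan is to reproduce the recursion behind Lemma~\ref{lm:1-1/e^l}, now tracking the residual gap $\Delta_m^* \doteq f_{avg}(\pi^*)-f_{avg}(\pi_{[m]})$ after the semi-adaptive policy has selected $m$ elements, but with the exact greedy gain replaced by the semi-adaptive gain and the mismatch absorbed into a multiplicative $(1-\eps)$ supplied by the information gap. Writing $g_m\doteq\ex{\max_{e\notin\dom(\Psi_m^\pi)}\Delta(e\mid\Psi_m^\pi)}$ for the ideal fully adaptive greedy gain available from the state reached after $m$ selections, I would prove two inequalities and then iterate.

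The first is an upper bound on the optimum: $\Delta_m^*\le k\,g_m$. As in Lemma~\ref{lm:1-1/e^l}, adaptive monotonicity gives $f_{avg}(\pi^*)\le f_{avg}(\pi_{[m]}@\pi^*)$, and I would expand $f_{avg}(\pi_{[m]}@\pi^*)-f_{avg}(\pi_{[m]})$ with the same machinery used there---the indicator variables $\1_{\psi',\psi,e,\theta}$, the independence of $f(\psi'\cup\Phi(e))-f(\psi')$ from $\Theta_{\pi^*}$, and adaptive submodularity to bound each marginal gain $\Delta(e\mid\psi')$ of a $\pi^*$-element by $\Delta(e\mid\Psi_m^\pi)\le\max_{e'}\Delta(e'\mid\Psi_m^\pi)$. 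Since $\pi^*$ selects at most $k$ items under every realization, summing yields $\Delta_m^*\le k\,g_m$. The second is a lower bound on the greedy step: the expected marginal gain of $\pi$'s $(m{+}1)$-th selection equals $\E_{\psi'}\!\big[\max_e\sav(e,\cdot,\psi')\big]=\E_{\psi'}\!\big[\max_e\ex{\Delta(e\mid\Psi_m^\pi)\mid\psi'}\big]$, where $\psi'$ is the (possibly stale, within-batch) queried realization at that moment. Because Algorithm~\ref{alg:semi} only ever selects while $\ig(\cdot,\psi')\ge1-\eps$, the definition of the information gap gives, pointwise in $\psi'$, $\max_e\ex{\Delta(e\mid\Psi_m^\pi)\mid\psi'}\ge(1-\eps)\,\ex{\max_e\Delta(e\mid\Psi_m^\pi)\mid\psi'}$; averaging over $\psi'$ and using the tower rule gives $f_{avg}(\pi_{[m+1]})-f_{avg}(\pi_{[m]})\ge(1-\eps)g_m$.

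Combining, $\Delta_m^*-\Delta_{m+1}^*\ge(1-\eps)g_m\ge\tfrac{1-\eps}{k}\Delta_m^*$, hence $\Delta_{m+1}^*\le\big(1-\tfrac{1-\eps}{k}\big)\Delta_m^*$. Iterating from $m=0$, with $\Delta_0^*\le f_{avg}(\pi^*)$, gives $\Delta_\ell^*\le e^{-(1-\eps)\ell/k}f_{avg}(\pi^*)$, i.e. $f_{avg}(\pi_{[\ell]})\ge\big(1-e^{-(1-\eps)\ell/k}\big)f_{avg}(\pi^*)$. The stated form then follows from the elementary estimate $e^{-(1-\eps)x}\le e^{-x}+\eps$ for $x\ge0$ and $\eps\in(0,1)$, whose worst-case gap $\eps(1-\eps)^{1/\eps-1}<\eps$ is verified by setting one derivative to zero.

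I expect the greedy-step lower bound to be the main obstacle. The delicate point is to argue that the realized expected gain of a selection performed \emph{inside} a batch is exactly the expected semi-adaptive value and that the information-gap inequality transfers to a clean pointwise factor $(1-\eps)$: within a batch the selections are committed against a stale $\psi'$ while the true states $\Psi_m^\pi$ carry not-yet-queried elements, so the unobserved randomness has to be integrated correctly. This is precisely what the conditional-independence primitives and the indicator variables defined before Lemma~\ref{lm:1-1/e^l} are designed to handle, and the same bookkeeping is what makes the $\pi^*$ upper bound go through once the state contains unqueried elements; the exponential-to-additive conversion at the end is routine.
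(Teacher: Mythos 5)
Your proposal is correct and follows essentially the same route as the paper's proof: concatenate $\pi_{[m]}@\pi^*$, use adaptive monotonicity plus the indicator/independence machinery and adaptive submodularity to bound the optimum's residual by $k\,\ex{\max_{e}\Delta(e\mid\Psi_m^{\pi})}$, relate that quantity to the realized greedy step via the information-gap condition applied pointwise in the queried realization (the paper's $\bar\Psi$) with the tower rule, and iterate the resulting recursion $\Delta_{m+1}^*\le\big(1-\tfrac{1-\eps}{k}\big)\Delta_m^*$ before converting $e^{-(1-\eps)\ell/k}$ to $e^{-\ell/k}+\eps$. The only difference is cosmetic: you split the paper's single chain of inequalities into two separate bounds through the intermediate quantity $g_m$, which is the same decomposition in different packaging.
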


This is one of the main technical contributions of the paper, but due to the space 
constraint we provide the proof of this lemma in Appendix~\ref{sec:omitted}. The proof of 
this lemma relies on Lemma \ref{lm:1-1/e^l} and uses a similar machinery.

\subsection{Query Complexity}
In this subsection, we bound the number of batch queries of the semi-adaptive 
greedy policy. We define random variable $\Psi'_t$ to be the partial realization 
obtained by the $t$-th batch query (do not confuse it with $\Psi_i^{\pi}$). We use $\psi'_t$ to indicate the realization of 
random variable $\Psi'_t$. The next lemma shows that after any $\log_{\frac 1 
{1-\eps/ 2}} 
\big(\frac{n}{\delta}\big) = O_{\eps}\big(\log\big(\frac{n}{\delta}\big)\big)$ batch 
queries, the maximum expected  marginal benefit drops by a factor $(1-\frac{\eps}{2})$, 
with high probability. We later apply this lemma iteratively for $O(\log k)$ times 
to show that after $O_{\eps}(\log n \log k)$ batch queries, the maximum 
expected marginal gain is vanishingly small.
\begin{lemma}\label{lm:superRound}
	Pick an arbitrary $t$ and fix partial realization $\psi'_t$. Let $\Delta'_t = 
	\max_{e\notin \dom(\psi'_t)} \Delta(e|\psi'_t)$, and let $t^+= t+ \log_{\frac 1 
	{1-\eps/ 2}} \big(\frac{n}{\delta}\big)$. We have $$\max_{e\notin 
	\dom(\Psi'_{t^+})} \Delta(e|\Psi'_{t^+}) \leq \left(1-\frac {\eps} 2\right) \Delta'_t, $$ with 
	probability at least $1-\delta$.
\end{lemma}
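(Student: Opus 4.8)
The plan is to track, for each individual element, how its conditional marginal benefit evolves across batch queries, rather than tracking the maximum directly. First I would extend the notation by writing $\Delta'_s = \max_{e\notin\dom(\psi'_s)}\Delta(e|\psi'_s)$ for every batch $s$, so that $\Delta'_t$ is the quantity in the statement. Adaptive submodularity gives $\Delta'_{s}\le \Delta'_{s-1}$ deterministically (more queried information only shrinks marginals, and the set over which we maximize only shrinks), so $(\Delta'_s)_{s\ge t}$ is non-increasing and in particular $\Delta'_s\le\Delta'_t$ for all $s\ge t$. The key reduction is a union bound: it suffices to show that for every fixed $e\in E$ one has $\prob{e\notin\dom(\Psi'_{t^+})\text{ and }\Delta(e|\Psi'_{t^+})>(1-\eps/2)\Delta'_t}\le \delta/n$, since the event $\{\Delta'_{t^+}>(1-\eps/2)\Delta'_t\}$ is contained in the union over all $e\in E$ of these per-element events.

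For a fixed $e$, call $e$ \emph{alive after batch} $s$ if $e\notin\dom(\psi'_s)$ and $\Delta(e|\psi'_s)>(1-\eps/2)\Delta'_t$. Being alive is a non-increasing property in $s$: once $e$ is queried it stays queried, and once $\Delta(e|\psi'_s)$ drops below the threshold it stays below by adaptive submodularity. The heart of the proof is to show that, conditioned on $\psi'_{s-1}$, the probability that $e$ is still alive after batch $s$, given that it was alive after batch $s-1$, is at most $1-\eps/2$. Fixing $\psi'_{s-1}$ makes the set $B_s$ of elements selected in batch $s$ deterministic, so there are two cases. If $e\in B_s$ then $e$ is queried during the batch and is not alive afterwards, so the conditional survival probability is $0$. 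If $e\notin B_s$, I would invoke the batch's termination rule: the inner loop stops because $\ig(\cdot,\psi'_{s-1})<1-\eps$, i.e.\ the maximum semi-adaptive value is strictly below $(1-\eps)$ times the information-gap denominator. Since $e$ is one of the candidate elements at termination, its semi-adaptive value is at most that maximum; and the denominator equals $\ex{\Delta'_s\mid\psi'_{s-1}}\le\Delta'_{s-1}\le\Delta'_t$. Combining these gives the crucial estimate $\ex{\Delta(e|\Psi'_s)\mid\psi'_{s-1}}<(1-\eps)\Delta'_t$.

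With this expectation bound in hand I would close the per-element step by a reverse-Markov argument. Conditioned on $\psi'_{s-1}$ the variable $\Delta(e|\Psi'_s)$ lies in $[0,\Delta'_t]$ (again by monotonicity) and has conditional mean below $(1-\eps)\Delta'_t$; if it exceeded $(1-\eps/2)\Delta'_t$ with conditional probability more than $1-\eps/2$, its mean would be at least $(1-\eps/2)^2\Delta'_t\ge(1-\eps)\Delta'_t$, a contradiction. Hence the conditional probability that $e$ stays above the threshold is at most $1-\eps/2$, matching the selected case. Telescoping this one-step bound over the $T=t^+-t=\log_{1/(1-\eps/2)}(n/\delta)$ batches (via the tower property, using that the alive indicator is monotone in $s$) yields $\prob{e\text{ alive after }t^+}\le(1-\eps/2)^{T}=\delta/n$, and the union bound over the $n$ elements completes the argument.

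The step I expect to be the main obstacle — and the reason the analysis is carried out per element rather than for the maximum — is that $\Delta'_s$ need not decrease in a single batch even in expectation: a different element can attain the maximum under each posterior, keeping $\ex{\Delta'_s\mid\psi'_{s-1}}$ close to $\Delta'_{s-1}$ while the information gap is small. Tracking one element at a time sidesteps this, because the information-gap stopping rule controls the \emph{expected} marginal of every individual unselected element, which is exactly what the reverse-Markov estimate consumes. A secondary point to treat cleanly is the alternative termination of a batch via the cardinality bound $|A|=k$ rather than the information gap; in that case the policy halts, so I would restrict attention to batches that stop because $\ig$ falls below $1-\eps$, which is the only regime relevant for upper-bounding the number of rounds.
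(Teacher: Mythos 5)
Your proposal is correct and follows essentially the same route as the paper's proof: the core step in both is to use the information-gap stopping rule to bound the conditional expectation of each unselected element's marginal after a batch, apply a reverse-Markov argument to conclude that each ``alive'' element falls below the $(1-\eps/2)\Delta'_t$ threshold with probability at least $\eps/2$ per batch, and iterate for $\log_{1/(1-\eps/2)}(n/\delta)$ batches. The only (immaterial) difference is bookkeeping: the paper tracks the expected size of the set of alive elements and finishes with Markov's inequality, whereas you track per-element survival probabilities and finish with a union bound --- the two are the same computation since $\ex{|S_{t^+}|}=\sum_{e}\prob{e\in S_{t^+}}$; your write-up also makes explicit the reverse-Markov step and the $|A|=k$ termination case that the paper leaves implicit.
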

\begin{proof}
	For any $t'\geq t$ we use the random variable $S_{t'}$ to indicate the set of 
	elements such that $\Delta(e|\Psi'_{t'}) \geq (1-\frac{\eps}{2})\Delta'_t$. To 
	prove the above lemma, we show that $\ex{|S_{t'}|} \leq (1-\frac{\eps}{2}) 
	\ex{|S_{t'+1}|}$. This together with $|S_t| \leq n$ implies that $\ex{|S_{t^+}|} 
	\leq \delta$ for $t^+ = t+ \log_{\frac 1 {1-\eps/ 2}} \big(\frac{n}{\delta}\big)$. 
	Note that $|S_{t^+}|$ is a non-negative integer, and hence we have $S_{t^+} = 
	\emptyset$ with probability at least $1-\delta$.
	
	Next, we show that $\ex{|S_{t'}|} \leq (1-\frac{\eps}{2}) \ex{|S_{t'+1}|}$. First 
	note that by adaptive monotonicity $e\in S_{t'+1}$ implies $e\in S_{t'}$, and 
	hence we have $S_{t'+1} \subseteq S_{t'}$. In the following, we use the notion 
	of 
	information gap and show that for any element $e\in S_{t'}$, we have $e \notin 
	S_{t'+1}$ with probability at least $\frac{\eps}{2}$. This directly implies 
	$\ex{|S_{t'}|} \leq (1-\frac{\eps}{2}) \ex{|S_{t'+1}|}$ as desired.
	Note that when we query $\Psi'_{t'+1}$ the information gap is at most 
	$(1-\eps)$. Hence, for some $\Psi_i^{\pi}$ (which corresponds to $\Psi'_{t'+1}$) 
	we have
	\begin{align*}
	\max_{e\notin \dom(\Psi_i^{\pi})} \E_{\psi'_{t'} \sub \Psi_i^{\pi} }\big[ 
	\Delta(e|\Psi_i^{\pi})\big]
	&\leq (1-\eps) \E_{\psi'_{t'} \sub \Psi_i^{\pi} }\big[ \max_{e\notin 
	\dom(\Psi_i^{\pi})} \Delta(e|\Psi_i^{\pi})\big] & \text{information gap}\\
	& \leq (1-\eps) \Delta'_t. &\text{by adaptive monotonicity}
	\end{align*}
	This implies that for all $e\notin \dom(\Psi_i^{\pi})$, with probability at least $\frac{\eps}{2}$, we have $\Delta(e|\Psi_i^{\pi}) \leq (1-\frac{\eps}{2}) \Delta'_t$. Therefore, for any element $e\in S_{t'}$, we have $e \notin S_{t'+1}$ with probability at least $\frac{\eps}{2}$, as promised.
\end{proof}

Now, we are ready to prove the main theorem of this section. In the following 
theorem, $\pi$ is the semi-adaptive greedy policy, $\pi_{[\ell]}$ is a policy that 
runs $\pi$ and stops if it selects $\ell$ items and $\pi_{[\ell]}^T$ is a policy that 
runs $\pi_{[\ell]}$ and stops if it makes $T$ batch queries.

\begin{theorem}\label{thm:submo:round}
	Let $\pi$ be the semi-adaptive greedy policy and let $\pi_{[\ell]}^T$ be a policy that runs $\pi_{[\ell]}$ and stops if it makes $T$ batch queries.
	For any policy $\pi^*$ (including the optimum policy) and any positive integer $\ell$ we have 
	\begin{align*}
	f_{avg}(\pi_{[\ell]}^T) > \big(1-e^{-\frac{\ell}{k}}-3\eps\big) f_{avg}(\pi^*),
	\end{align*}
	for some $T\in O_{\eps}(\log n \log {\ell})$.
\end{theorem}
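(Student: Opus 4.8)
The plan is to combine the utility guarantee already established in Lemma~\ref{lm:par:1-1/e^l} for the batch-untruncated policy $\pi_{[\ell]}$ with a bound on how many batch queries $\pi_{[\ell]}$ actually needs. Concretely, I would show that capping the number of batch queries at $T=O_{\eps}(\log n\log\ell)$ costs at most a $2\eps$ fraction of the utility, i.e. $f_{avg}(\pi_{[\ell]})-f_{avg}(\pi_{[\ell]}^T)\le 2\eps\,f_{avg}(\pi_{[\ell]})$. Since Lemma~\ref{lm:par:1-1/e^l} gives $f_{avg}(\pi_{[\ell]})>(1-e^{-\ell/k}-\eps)f_{avg}(\pi^*)$, and since $(1-2\eps)(1-e^{-\ell/k}-\eps)\ge 1-e^{-\ell/k}-3\eps$, this immediately produces the claimed bound. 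Thus the entire argument reduces to (i) forcing the maximum marginal benefit to become tiny within $T$ batch queries, and (ii) translating ``tiny maximum marginal benefit'' into ``small truncation loss''.

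For step (i) I would iterate Lemma~\ref{lm:superRound}. Let $\Delta_0'=\max_{e}\Delta(e|\emptyset)$ be the initial maximum marginal benefit; by adaptive submodularity it upper bounds every marginal benefit encountered during the run, and since the very first greedy selection already realizes $\Delta_0'$ (at the first step the semi-adaptive value equals $\Delta(\cdot|\emptyset)$), adaptive monotonicity gives $f_{avg}(\pi_{[\ell]})\ge f_{avg}(\pi_{[1]})\ge\Delta_0'$. Group the batch queries into \emph{super-rounds} of length $\log_{\frac{1}{1-\eps/2}}(n/\delta)$, so that by Lemma~\ref{lm:superRound} each super-round multiplies the maximum marginal benefit by $1-\eps/2$ with probability at least $1-\delta$. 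Conditioning on the partial realization at the start of each super-round and taking a union bound over $r:=\lceil\log_{\frac{1}{1-\eps/2}}(\ell/\eps)\rceil=O_{\eps}(\log\ell)$ super-rounds, with probability at least $1-r\delta$ the maximum marginal benefit after $T:=r\log_{\frac{1}{1-\eps/2}}(n/\delta)$ batch queries is at most $(1-\eps/2)^r\Delta_0'\le(\eps/\ell)\Delta_0'$. Choosing $\delta=\eps/(\ell r)$ keeps $\log(n/\delta)=O_{\eps}(\log n)$ in the regime of interest (where $\ell=O(k\log(1/\eps))=\poly(n)$), so $T=O_{\eps}(\log n\log\ell)$ as required.

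For step (ii), observe that the only elements $\pi_{[\ell]}$ selects but $\pi_{[\ell]}^T$ does not are those chosen after the $T$-th batch query; there are at most $\ell$ of them, and by adaptive submodularity each contributes at most the maximum marginal benefit prevailing after query $T$. On the good event (probability $\ge 1-r\delta$) this maximum is at most $(\eps/\ell)\Delta_0'$, so the lost gain is at most $\ell\cdot(\eps/\ell)\Delta_0'=\eps\Delta_0'\le\eps\,f_{avg}(\pi_{[\ell]})$. On the complementary event (probability $\le r\delta$) I bound the lost gain crudely by $\ell\Delta_0'$, contributing at most $r\delta\,\ell\Delta_0'\le\eps\Delta_0'\le\eps\,f_{avg}(\pi_{[\ell]})$ by the choice $\delta=\eps/(\ell r)$. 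Adding the two contributions yields $f_{avg}(\pi_{[\ell]})-f_{avg}(\pi_{[\ell]}^T)\le 2\eps\,f_{avg}(\pi_{[\ell]})$, which closes the argument via the multiplicative estimate from the first paragraph.

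The main obstacle is the iterated application of Lemma~\ref{lm:superRound}: it is stated for a \emph{fixed} starting realization, so chaining it across $r$ super-rounds demands conditioning on the partial realization at the start of each super-round together with a union bound, and then choosing the per-super-round failure probability $\delta$ to satisfy two competing requirements simultaneously — small enough that the failure contribution to the truncation loss is absorbed (needing only $\delta$ inverse-polynomial), yet large enough that $\log(n/\delta)$ stays $O_{\eps}(\log n)$. These are jointly satisfiable precisely because the super-round length depends on $\delta$ only logarithmically while the loss depends on it linearly. A secondary but convenient point is the inequality $\Delta_0'\le f_{avg}(\pi_{[\ell]})$, which lets me measure the truncation loss directly against $f_{avg}(\pi_{[\ell]})$ and thereby avoid any delicate dependence on $f_{avg}(\pi^*)$ in the query-complexity bookkeeping.
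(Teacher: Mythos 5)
Your proposal is correct and takes essentially the same route as the paper's own proof: it combines Lemma~\ref{lm:par:1-1/e^l} with an iterated, union-bounded application of Lemma~\ref{lm:superRound} over $O_{\eps}(\log \ell)$ super-rounds of length $O_{\eps}(\log n)$ each, and converts the resulting tiny maximum marginal benefit into a truncation loss of at most $O(\eps)\, f_{avg}(\pi_{[\ell]})$ via the same inequality $\Delta_0' \le f_{avg}(\pi_{[\ell]})$ (the paper's $\Delta_1(\pi)$). If anything, your accounting is slightly tighter: by choosing $\delta=\eps/(\ell r)$ and bounding the loss on the failure event crudely by $\ell\Delta_0'$, you make explicit the bad-event contribution to the $3\eps$ slack, which the paper (with its choice $\delta=\eps/\log_{\frac{1}{1-\eps/2}}(\ell/\eps)$, giving failure probability $\eps$) handles only implicitly.
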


\begin{proof}
	Let us set $\delta=\frac {\eps}{\log_{\frac 1 {1 -\eps/2}} (\frac {\ell} {\eps})}$ 
	and let $\Delta_1(\pi)$ be the expected marginal benefit of the first selected item. By 
	applying Lemma \ref{lm:superRound} iteratively $\log_{\frac 1 {1 -\eps/2}} 
	(\frac {\ell} {\eps})$ times we have 
	$$\max_{e\notin \dom(\Psi'_{T})} 
	\Delta(e|\Psi'_{T}) \leq \big(1-\frac {\eps} 2\big)^{\log_{\frac 1 {1 -\eps/2}} 
	(\frac {\ell} {\eps})} \Delta_1(\pi) = \frac {\eps}{{\ell}}\Delta_1(\pi),$$
with probability $1-\delta \times \log_{\frac 1 {1 -\eps/2}} (\frac {\ell} {\eps}) = 
1-\eps$. This means that with probability $(1-\eps)$ the total expected marginal benefit 
of 
the elements added after the $T$-th batch query is at most $${\ell} \times \frac 
{\eps}{{\ell}}\Delta_1(\pi) =  \eps \Delta_1(\pi) \leq \eps f_{avg}(\pi_{[\ell]}),$$ where $T=\log_{\frac 1 {1-\eps/ 2}} \big(\frac{n}{\delta}\big) \times \log_{\frac 1 
{1 -\eps/2}} (\frac {\ell} {\eps})\in O_{\eps}(\log n \log {\ell})$. This, together with 
lemma \ref{lm:par:1-1/e^l}, implies that if we stop policy $\pi$ after $T\in 
O_{\eps}(\log n \log {\ell})$ batch queries for any policy $\pi^*$ we have 
	\begin{align*}
	f_{avg}(\pi_{[\ell]}) > \big(1-e^{-\frac{\ell}{\ex{k}}}-3 \eps\big) f_{avg}(\pi^*),
	\end{align*}
	as desired.
	%XXX Set $\eps=\epsilon/3$ and use $\epsilon$ in the theorem.
\end{proof}

\section{Semi Adaptive Stochastic Minimum Cost Coverage}\label{sec:batch:setcover}
In this section, we bound the efficiency and round complexity of the 
semi-adaptive greedy policy, outlined in Algorithm~\ref{alg:semicover}. %In this section we use $\pi$ to refer to this policy.
To simplify the proofs, we use a more restricted notion of information gap. It is easy to observe that the same proofs in the previous section hold using this version of information gap as well.\footnote{We use this notion in this section for simplicity. However, since the previous notion of information gap is more intuitive, we keep the previous notion as well.}

\begin{algorithm}[H]
	\centering
	\caption{Semi Adaptive Greedy Policy for Minimum Cost Coverage\label{alg:semicover}} 
	\begin{algorithmic}[1]
		\Require Ground set $E$, quota $Q$,  distribution $p(\phi)$, function $f(\cdot)$, value $\epsilon>0$
		% \Procedure{\bmm}{}
		\State {\bf initialize}  $A\leftarrow \emptyset$, $\psi'\leftarrow \emptyset$, $i\leftarrow 1$
		\While{$f(A,\psi')<Q$}
		\While{$\rig(i,\psi')\geq (1-\epsilon)$} 
		\State $e^* = \arg\max_{e\in E\setminus A} \sav(e,i, \psi')$
		\State $A\leftarrow A\cup\{e^*\}$
		\EndWhile
		\State $\psi''\leftarrow$ query all elements in $A\setminus\dom(\psi')$
		\State $\psi'\leftarrow \psi' \cup \psi''$
		\EndWhile
		\State \Return A
		
		% \EndProcedure
	\end{algorithmic}    
\end{algorithm}

\begin{definition}[Restricted Information Gap]\label{def:RestictedInfoGap} At any step $i$ of the policy $\pi$, and given the partial realization $\psi'\sub \Psi_i^{\pi}$, the restricted information gap is defined as follows.
	\begin{align*}
	\rig(i,\psi')=\frac
	{\max_{e\notin \dom(\psi')} \Delta(e|\psi')}
	{\E_{\psi' \sub \Psi_i^{\pi} }\big[ \max_{e\notin \dom(\Psi_i^{\pi})} 
	\Delta(e|\Psi_i^{\pi})\big]}.
	\end{align*}
\end{definition}

\subsection{Performance}

Next theorem bounds the performance of our policy. 
\begin{theorem}\label{thm:perf:setcover}
	Assume that there is a value $\eta \in (0,Q]$ such that $f(\psi)>Q-\eta$ implies $f(\psi)=Q$ for all $\psi$. Let $\pi^*$ be an arbitrary policy that covers everything, i.e., $f(\pi^*)=Q$ for all $\phi$. Let $\pi$ be the semi-adaptive greedy policy, outlined in Algorithm~\ref{alg:semicover}. We have
	\begin{align*}
		{c_{avg}(\pi)} \leq \Big(\frac{c_{avg} (\pi^*)+1}{1-\eps}\Big)\log\Big(\frac{nQ}{\eta}\Big)+1.%\footnote{It is easy to observe that the base of logarithm tends to $e$ as $\epsilon$ tends to zero. However, we use base $2$ for simplicity of presentation.}
	\end{align*}
	%where $n=|E|$.% is the size of the ground set.
\end{theorem}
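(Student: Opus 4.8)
The plan is to follow the proof of Theorem~\ref{thm:setcovermain} almost verbatim, substituting the fully-sequential performance bound of Lemma~\ref{lm:1-1/e^l} with its semi-adaptive counterpart (Lemma~\ref{lm:batch:1-1/e^l}), which pays the $(1-\eps)$ price incurred by querying in batches under the restricted information gap. Let $K=|S(\pi^*,\Phi)|$ so that $\ex{K}=c_{avg}(\pi^*)$, and let $\pi_{[\ell]}$ be the truncation of the semi-adaptive greedy policy $\pi$ to $\ell$ selected items. The semi-adaptive analog I expect to invoke has the form $f_{avg}(\pi_{[\ell]}) > \big(1 - e^{-\frac{(1-\eps)\ell}{\ex{K}+1}}\big) f_{avg}(\pi^*)$; the only change relative to Lemma~\ref{lm:1-1/e^l} is the extra factor $(1-\eps)$ in the exponent, reflecting that each batch-greedy selection captures only a $(1-\eps)$ fraction of the expected maximum marginal gain.

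Given this guarantee, I would choose $\ell = \frac{\ex{K}+1}{1-\eps}\log\big(\frac{nQ}{\eta}\big)$, inflated by exactly the factor $\frac{1}{1-\eps}$ relative to the choice in Theorem~\ref{thm:setcovermain} so as to cancel the $(1-\eps)$ in the exponent. Since $\pi^*$ covers every realization we have $f_{avg}(\pi^*)=Q$, so the lemma yields $f_{avg}(\pi_{[\ell]}) > \big(1 - e^{-\log(nQ/\eta)}\big)Q = Q - \frac{\eta}{n}$. By adaptive monotonicity $f(\pi_{[\ell]},\phi)\le Q$ for every $\phi$, so $Q-f(\pi_{[\ell]})$ is a non-negative random variable with expectation below $\frac{\eta}{n}$; Markov's inequality then gives $f(\pi_{[\ell]}) > Q-\eta$ with probability at least $1-\frac{1}{n}$, and the $\eta$-rounding hypothesis upgrades this to $f(\pi_{[\ell]})=Q$ with probability at least $1-\frac{1}{n}$.

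To finish I would bound the expected cost by conditioning on whether $\pi_{[\ell]}$ already covers. On the good event (probability $\ge 1-\frac1n$) the full policy $\pi$ reaches the quota using no more items than $\pi_{[\ell]}$ selects, and $|S(\pi_{[\ell]},\Phi)|\le \ell$ deterministically; on the complementary event (probability $\le \frac1n$) the policy selects at most all $n$ ground elements. Hence $c_{avg}(\pi) \le \ex{|S(\pi_{[\ell]},\Phi)|} + n\cdot\frac{1}{n} \le \ell + 1 = \big(\frac{c_{avg}(\pi^*)+1}{1-\eps}\big)\log\big(\frac{nQ}{\eta}\big)+1$, which is the claimed bound.

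The Markov-plus-truncation accounting is routine, being identical in spirit to Theorem~\ref{thm:setcovermain}. The main obstacle is establishing the semi-adaptive performance guarantee (Lemma~\ref{lm:batch:1-1/e^l}) with precisely the $(1-\eps)$ factor in the exponent. Concretely, one must adapt the marginal-gain accounting of Lemma~\ref{lm:1-1/e^l} to the policy $\pi$, where within a batch we select elements by their semi-adaptive values $\sav$ on the basis of the last queried realization $\psi'$ while their true outcomes remain unobserved; the stopping rule $\rig(i,\psi')\ge 1-\eps$ is exactly what certifies that the greedily-selected element still achieves at least a $(1-\eps)$ fraction of $\E[\max_e \Delta(e|\Psi_i^{\pi})]$, so that the coverage deficit $\Delta_i^* = f_{avg}(\pi^*)-f_{avg}(\pi_{[i]})$ contracts geometrically at the slowed rate $\big(1 - \frac{1-\eps}{\ex{K}+1}\big)$. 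Carefully handling the bookkeeping for selected-but-unqueried items --- the source of the event-indicator machinery set up in the primitives before Lemma~\ref{lm:1-1/e^l} --- is where the real work lies.
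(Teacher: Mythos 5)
Your skeleton is the paper's proof: invoke the semi-adaptive analogue of Lemma~\ref{lm:1-1/e^l} with the $(1-\eps)$-degraded exponent, set $\ell=\frac{\ex{K}+1}{1-\eps}\log(nQ/\eta)$, use $f_{avg}(\pi^*)=Q$, apply Markov plus the $\eta$-rounding hypothesis, and charge $n\cdot\frac{1}{n}=1$ for the failure event; the constants and the final bound match the paper exactly.

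There is, however, one genuine gap, and it sits exactly where you say ``the real work lies.'' You state the key lemma for the item-count truncation $\pi_{[\ell]}$ (``truncation \dots to $\ell$ selected items''), i.e.\ $f_{avg}(\pi_{[\ell]}) > \big(1-e^{-(1-\eps)\ell/(\ex{K}+1)}\big)f_{avg}(\pi^*)$. The paper's Lemma~\ref{lm:batch:1-1/e^l} is not this statement: it concerns the threshold truncation $\pi^{\tau_\ell}$, where $\pi^{\tau}$ stops once every remaining semi-adaptive value is at most $\tau$, and $\tau_\ell$ is calibrated so that the \emph{expected} number of selected items is $\ell$. The distinction is not cosmetic. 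In the coverage setting $K=|S(\pi^*,\Phi)|$ is a random variable correlated with the realization, and the deficit-contraction recursion needs $f_{avg}(\pi^{\tau_i}@\pi^*)-f_{avg}(\pi^{\tau_i}) \le \frac{\ex{K}}{1-\eps}\,\Delta^{\tau_i}(\pi)$. The paper gets this by bounding every marginal gain available to $\pi^*$ after the truncated policy stops by the \emph{deterministic} quantity $\tau_i/(1-\eps)\le \Delta^{\tau_i}(\pi)/(1-\eps)$ (its inequalities \eqref{eq:Delta<Tau} and \eqref{eq:tauVSdelta}), which can be pulled out of the sum over trajectories so that what remains is exactly $\ex{K}$. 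Under your hard truncation $\pi_{[i]}$, the natural bound on the remaining gains is the realization-dependent $\max_{e}\Delta(e|\Psi_i)$, and your sketched contraction would require $\E\big[K\cdot \max_{e}\Delta(e|\Psi_i)\big] \le \ex{K}\cdot\E\big[\max_{e}\Delta(e|\Psi_i)\big]$, which fails in general because the two factors are correlated. (This is precisely why the maximization-side Lemma~\ref{lm:par:1-1/e^l} \emph{can} use $\pi_{[\ell]}$: there $\pi^*$ selects at most $k$ items deterministically, so the count factors out.) The fix is to run your whole argument with $\pi^{\tau_\ell}$ in place of $\pi_{[\ell]}$: the Markov and $\eta$-rounding steps are unchanged, and the cost accounting becomes $\big(1-\frac{1}{n}\big)\ell + \frac{1}{n}\cdot n \le \ell+1$, using that $\ell$ is the expected (rather than worst-case) number of items selected --- exactly as in Theorem~\ref{thm:setcovermain} and the paper's own proof of this theorem.
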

The proof of this theorem is a combination of the ideas in 
Lemma~\ref{lm:1-1/e^l}, Lemma~\ref{lm:par:1-1/e^l} and 
Theorem~\ref{thm:setcovermain} and is presented in Appendix~\ref{sec:omitted}.

\subsection{Query Complexity}
Next, we bound the number of batch queries of the semi-adaptive greedy policy.

\begin{theorem}\label{thm:round:setcover}
	Assume that there is a value $\eta \in (0,Q]$ such that $f(\psi)>Q-\eta$ implies $f(\psi)=Q$ for all $\psi$. Let $\pi^*$ be any policy that covers everything, i.e., $f(\pi^*)=Q$ for all $\phi$. 
	Let $\pi$ be the semi-adaptive greedy policy (outlined in Algorithm~\ref{alg:semicover}) and let $\pi^T$ be a policy that runs $\pi$ and stops if it makes $T$ batch queries.
	For some $T\in O\big(\log n \log (Qn/\eta)\big)$ we have $f(\pi^T) =Q$ with probability $1-1/n$.
\end{theorem}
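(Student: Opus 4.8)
The plan is to follow the query-complexity template of Theorem~\ref{thm:submo:round}, but instead of controlling the tail of a fixed $\ell$-item horizon I will use Lemma~\ref{lm:superRound} (which, as the text notes, holds verbatim for the restricted information gap) to drive the maximum expected marginal benefit down to a level small enough that the coverage argument of Theorem~\ref{thm:setcovermain} applies. The conversion from ``small marginal benefit'' to ``small utility gap'' reuses the concatenation estimate inside Lemma~\ref{lm:1-1/e^l}: if after $T$ batch queries every item has marginal benefit at most $\tau$ with respect to the queried realization, then running $\pi^*$ afterwards can close the gap $Q-f$ by at most $\ex{K}\tau$.

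Write $K=|S(\pi^*,\Phi)|$, so $\ex{K}=\cavg(\pi^*)$, and fix the target threshold $\tau=\frac{\eta}{2n(\ex{K}+1)}$. The initial maximum marginal benefit $\max_e\Delta(e\,|\,\emptyset)\le Q$ (in the coverage setting $f\le Q$), hence by Lemma~\ref{lm:superRound} a block of $m=\log_{1/(1-\eps/2)}\!\big(\tfrac{2nQ(\ex{K}+1)}{\eta}\big)$ consecutive ``super-rounds,'' each of length $\log_{1/(1-\eps/2)}(n/\delta)$ batch queries, drives the maximum expected marginal benefit below $(1-\eps/2)^m Q\le\tau$. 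Taking $\delta=\frac{1}{2nm}$ and a union bound over the $m$ super-rounds, the event $E=\{\max_{e\notin\dom(\Psi'_T)}\Delta(e\,|\,\Psi'_T)\le\tau\}$ has probability at least $1-\frac{1}{2n}$; since $\ex{K}\le n$, the resulting query count is $T=m\cdot\log_{1/(1-\eps/2)}(n/\delta)\in O_{\eps}\!\big(\log n\,\log(nQ/\eta)\big)$.

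It remains to turn $E$ into coverage. Since $\pi^*$ covers every realization and $f\le Q$, monotonicity forces $f(\pi^T@\pi^*,\phi)=Q$ for all $\phi$, so the marginal gain of $\pi^*$ run after $\pi^T$ equals exactly $Q-f(\Psi'_T)$. On $E$ every item obeys $\Delta(e\,|\,\Psi'_T)\le\tau$, and adaptive submodularity only lowers this as $\pi^*$ accumulates its own observations; summing over the at most $K$ fresh items selected by $\pi^*$, exactly as in Lemma~\ref{lm:1-1/e^l}, gives
\begin{align*}
\ex{\big(Q-f(\Psi'_T)\big)\,\1_E}\ \le\ \tau\,\ex{K}\ \le\ \frac{\eta}{2n}.
\end{align*}
A Markov bound then yields $\prob{\{Q-f(\Psi'_T)\ge\eta\}\cap E}\le\frac{1}{2n}$, and because the $\eta$-hypothesis makes $f(\Psi'_T)<Q$ equivalent to $Q-f(\Psi'_T)\ge\eta$, combining with $\prob{E^c}\le\frac{1}{2n}$ gives $\prob{f(\pi^T)<Q}\le\frac1n$, i.e.\ $f(\pi^T)=Q$ with probability $1-1/n$.

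The step I expect to be the main obstacle is the conditional estimate $\ex{(Q-f(\Psi'_T))\1_E}\le\tau\ex{K}$: one must push the telescoping marginal-gain decomposition of Lemma~\ref{lm:1-1/e^l} through the indicator $\1_E$, checking that for every fresh item $e$ selected by $\pi^*$ the adaptive-submodularity bound $\Delta(e\,|\,\Psi'_T\cup\cdots)\le\Delta(e\,|\,\Psi'_T)\le\tau$ holds on $E$, while items already in $\dom(\Psi'_T)$ contribute nothing. The remaining work—balancing the two $\frac{1}{2n}$ failure terms coming from the super-round union bound and from the Markov step—is routine bookkeeping.
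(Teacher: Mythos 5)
Your proof is correct, and its first half---iterating Lemma~\ref{lm:superRound} with a union bound over super-rounds to drive the maximum expected marginal benefit below a small threshold within $O_{\eps}(\log n \log (nQ/\eta))$ batch queries---is exactly the paper's argument. Where you genuinely diverge is the conversion from ``small marginal benefits'' to ``coverage.'' The paper never invokes $\pi^*$ in this step: it uses the fact that the semi-adaptive greedy policy $\pi$, run to completion, itself satisfies $f(\pi)=Q$ on every realization, and that after the $T$-th query it can select at most $n$ further elements, each contributing at most the (by then tiny) maximum marginal benefit; with threshold $\frac{\eta}{2Qn^2}\Delta_1(\pi)\leq\frac{\eta}{2n^2}$ and failure probability $\frac{\eta}{2Qn}$ this gives the unconditional bound $f_{avg}(\pi^T)>Q-\frac{\eta}{n}$, and a single Markov step finishes. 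You instead close the gap with the concatenation $\pi^T@\pi^*$, pushing the telescoping/adaptive-submodularity decomposition of Lemma~\ref{lm:1-1/e^l} through the indicator $\1_E$, which yields the budget $\tau\ex{K}$ with $\tau=\frac{\eta}{2n(\ex{K}+1)}$ and a $\frac{1}{2n}+\frac{1}{2n}$ split of the failure probability. This works: the step you flagged as the main obstacle is sound, because conditioning on $\Psi'_T=\psi_T$ with $\psi_T\in E$ gives $\Delta(e|\psi')\leq\Delta(e|\psi_T)\leq\tau$ for every later subrealization $\psi'$ by adaptive submodularity, and summing $p(\psi_T)\,\ex{K\mid \Psi'_T=\psi_T}$ over $\psi_T\in E$ is at most $\ex{K}$. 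What the paper's route buys is simplicity---since $\pi$ itself covers, the entire $\pi^*$/$\ex{K}$ bookkeeping is unnecessary; what your route buys is a threshold scaling with $\ex{K}$ rather than $n$, which is conceptually tighter but makes no difference to the final bound once you invoke $\ex{K}\leq n$. Two minor points to patch: like the paper, you tacitly use $f\leq Q$ (both for $\max_e\Delta(e|\emptyset)\leq Q$ and to conclude $f(\pi^T@\pi^*,\phi)=Q$ from $f_{avg}(\pi^T@\pi^*)\geq f_{avg}(\pi^*)=Q$, which follows from adaptive monotonicity in expectation, not pointwise); and since your $\delta$ depends on $m$, your count is really $O_{\eps}\big(\log(nQ/\eta)(\log n+\log\log(nQ/\eta))\big)$, so you need the paper's footnoted observation that if $\log n\leq\log\log(Qn/\eta)$ then $T\leq n\in O(\log(Qn/\eta))$ trivially.
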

We use Lemma \ref{lm:superRound} presented in the previous section together 
with Theorem \ref{thm:perf:setcover} to prove the above theorem. The proof of 
this theorem follows the proof of Theorem \ref{thm:submo:round} and is 
presented in Appendix~\ref{sec:omitted}.

\section{Hardness}
\begin{theorem}\label{thm:hard}
Any policy with a constant approximation guarantee for adaptive stochastic submodular maximization requires $\Omega(\log n)$ batch queries.
\end{theorem}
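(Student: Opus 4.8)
The plan is to exhibit a single family of adaptive monotone submodular instances on a ground set of size $n$ in which the hidden structure has sequential ``depth'' $d=\Theta(\log n)$, and to argue that any policy using $T=o(\log n)$ batch queries recovers only an $o(1)$ fraction of the utility that a fully sequential policy attains, so that no constant factor approximation is possible below $\Omega(\log n)$ rounds. Since the approximation is measured against the optimal $k$-item policy, I get to choose $k$; I would take $k=\Theta(d)=\Theta(\log n)$ so that the budget is far too small to brute-force query the exponentially many candidate items at deep levels in a single batch. To handle randomized policies I would invoke Yao's principle: place a distribution over instances (a uniformly random hidden target) and show that every \emph{deterministic} $o(\log n)$-round policy has small expected utility, which then yields a fixed worst-case instance.

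The construction I have in mind is a pointer-chasing / binary-search tree of depth $d$: the items are the internal nodes of a complete binary tree with $\approx n$ leaves, and a uniformly random leaf is the hidden target. Querying a node reveals a ``direction'' that is informative --- it points one level further along the root-to-target path --- only for nodes on the target path, while off-path nodes carry no useful information toward the target. The essential feature is that the identity of the single informative node at level $j{+}1$ is revealed \emph{only} by querying the informative node at level $j$; hence a batch, which by definition cannot use the information gathered within itself, can advance the discovered prefix of the path by at most one level (up to guessing). A fully sequential greedy policy behaves like binary search, following the pointer one level per step, and with budget $k=\Theta(d)$ it reaches the target and attains utility $\Omega(1)$; this gives $f_{avg}(\pi^*)=\Omega(1)$.

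The core of the argument is the round lower bound, which I would prove by induction on the round index $t$: after $t$ batch queries the policy has identified the target path only down to depth $O(t)$, except with probability $o(1)$. The inductive step rests on two facts. First, the pointer mechanism means the only \emph{deterministically} informative query at the new frontier is the one revealed by the previous round, so honest following advances the frontier by one level per round. Second, any attempt to leap ahead must \emph{guess} an on-path node among the $2^{\Omega(d)}$ candidates at a deep level, and with total budget $k=\Theta(\log n)$ a union bound shows the probability of hitting a deep on-path node is $2^{-\Omega(d)}=n^{-\Omega(1)}$. Summing these failure probabilities over the $T=o(\log n)$ rounds keeps the total failure probability $o(1)$, so with $o(\log n)$ rounds the target is located with probability $o(1)$ and the expected utility is $o(1)=o(f_{avg}(\pi^*))$. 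Letting the approximation ratio tend to $0$ then forces $T=\Omega(\log n)$ for any constant factor.

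The hard part will \emph{not} be counting the rounds but guaranteeing that the utility is simultaneously adaptive monotone, adaptive submodular, and sequentially bottlenecked. The naive reward ``number of on-path nodes selected'' fails adaptive submodularity: conditioning on the first few directions concentrates the posterior on a subtree and thereby \emph{increases} the expected marginal gain of surviving candidate nodes, violating $\Delta(e\mid\psi)\ge\Delta(e\mid\psi')$ for $\psi\sub\psi'$. The fix I would pursue is to take $f$ to be a coverage / version-space-style objective --- for instance the posterior mass of targets \emph{eliminated} by the observed directions, which is adaptive monotone and adaptive submodular under this ``direction'' observation model --- and then to re-engineer the pointer so that eliminating a constant fraction of the remaining mass still requires discovering the informative node level by level. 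Reconciling ``marginal gains diminish under conditioning'' with ``the next useful query is hidden until the previous one is made'' is the delicate balance, and verifying adaptive submodularity of the final objective is the step I expect to occupy most of the work.
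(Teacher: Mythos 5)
You have correctly isolated the crux, but the step you defer to the end --- exhibiting an objective that is simultaneously adaptive monotone, adaptive submodular, and sequentially bottlenecked --- is the entire content of the theorem, and the specific fix you propose provably cannot work. Adaptive submodularity forces $\Delta(e|\emptyset)\geq \Delta(e|\psi)$ for every $e$ and every partial realization $\psi$, so any node that becomes valuable once the pointer reaches it must already carry at least that much expected value a priori, and a non-adaptive batch can harvest that a priori value without ever following pointers. Concretely, for the version-space (eliminated posterior mass) objective you suggest: querying a node at depth $j$ reveals whether it is on the target path (and if so, the next direction), so a \emph{single} batch consisting of all nodes in the top $\lfloor \log_2 k\rfloor$ levels of the tree (at most $k$ nodes, within your budget $k=\Theta(\log n)$) identifies the target path down to depth $\Theta(\log k)$ and eliminates a $1-O(1/k)$ fraction of the posterior mass. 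That is a near-optimal approximation in one round, so the $\Omega(\log n)$ lower bound evaporates for exactly the objective that restored adaptive submodularity; your original reward (number of on-path nodes found) keeps the bottleneck but, as you yourself note, violates adaptive submodularity. The two requirements pull in opposite directions in any pointer-chasing design, and no construction satisfying both is given, so the proof has a hole precisely where the theorem lives.

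The paper resolves this tension with a different mechanism: collisions rather than hidden pointers. Take $n=2^{k-1}-1$ elements partitioned uniformly at random into $k$ bags of geometrically increasing sizes $1,2,4,\dots$; the utility of a set $S$ is the number of distinct bags it meets, and selecting an element reveals its entire bag. Every useful element has marginal gain exactly $1$ a priori, and gains only ever drop to $0$, so adaptive monotonicity and adaptive submodularity are immediate; a sequential policy can always pick from an as-yet-unseen bag and earns $k$; but by symmetry a batch of $b_i$ useful elements is a uniformly random such set, and with probability $1-1/b_i$ it lands entirely inside the $\log^2 b_i$ largest bags, so each batch gains at most $\log^2 b_i+1$ in expectation. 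Summing over $t=o(\log n)=o(k)$ batches with $\sum_i b_i\leq k$ and concavity of $\log^2$ gives at most $t\log^2(k/t)+t=o(k)$, which kills any constant factor. If you wish to salvage your approach, you would need the reward of unexplored deep structure to be negligible a priori \emph{and} to remain no larger after conditioning; that is exactly what the bag construction achieves and what pointer chasing, by design, cannot.
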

\begin{proof}
Consider the following example. We have $n=2^{k-1}-1$ elements, and we want to select $k$ elements. The elements are decomposed into $k$ bags of sizes $1,2,\dots,2^k$, where the decomposition is chosen uniformly at random. The objective function for a set $S$ is the number of distinct bags that elements in $S$ belong to. Whenever we select an element $e$ we see all of the elements that are in the same bag as $e$.  It is easy to see that this function is adaptive monotone and adaptive submodular.

Note that one can iteratively select $k$ elements each with a marginal benefit of $1$ 
and hence the value of the optimum solution of this instance is $k$. Next we 
upper-bound the value of the solution of a policy with $t\in o(\log n)$ 
batch-queries.

Let $B_i$ be the $i$-th batch query and let $b_i=|B_i|$. Note that  the 
marginal gain of each element is either $0$ or $1$. Moreover, all of the 
elements with the marginal gain of $1$ are symmetric. Hence, without loss of 
generality, we assume that $B_i$ is random subset of elements with the marginal 
gain of $1$. Hence, with probability at least $(1-\frac 1 {b_i})$ all of the 
elements 
in $B_i$ belong to the $\log^2 b_i$ largest bags with the marginal gain of $1$. 
Hence, 
the 
expected marginal benefit of batch $B_i$ is at most $(1-\frac 1 {b_i})\log^2 b_i 
+\frac 1 {b_i} b_i \leq \log^2 b_i +1$. Hence the expected value of the solution of 
this policy is at most 
$$\sum_{i=1}^t (\log^2 b_i +1) \leq \sum_{i=1}^t (\log^2 \frac k t +1) = t\log^2 \frac k t + t \in o(k),$$
where the last inequality is due to $t\in o(\log n) = o(k)$.
\end{proof}

Notice that in the hard example provided in the above theorem, we upper bound 
the marginal gain of each batch of size $r$ by $\log^2 r + 1$. Hence if we 
force each batch to query exactly $r$ elements, the expected value of the final 
solution is at most $\frac k r \big(\log^2 r + 1\big) \in O(\frac {k \log^2 r} r)$. 
%This implies the following corollary.

\begin{corollary}
Let $\pi$ be a policy for adaptive stochastic submodular maximization that queries batches of size $r$. The approximation factor of $\pi$ is upper bounded by $O(\frac {\log^2 r} r)$.
\end{corollary}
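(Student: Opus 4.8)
The plan is to derive the corollary directly from the hard instance constructed in the proof of Theorem~\ref{thm:hard}, quantifying the per-batch loss incurred once every batch is forced to have size exactly $r$. Recall that on that instance the optimum policy attains value $k$ by uncovering one fresh bag per selection, so it suffices to show that a width-$r$ batch policy accrues expected value only $O\big(\tfrac{k\log^2 r}{r}\big)$ there; dividing by $k$ then yields the claimed $O\big(\tfrac{\log^2 r}{r}\big)$ approximation factor.

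First I would isolate the key per-batch estimate: conditioned on any history of previously revealed bags, the expected marginal gain of a single batch of $r$ elements is at most $\log^2 r + 1$. The argument has two ingredients. The first is an exchangeability observation: since the partition into bags of the fixed sizes $1,2,4,\dots$ is uniformly random and querying an element reveals only its own bag, the still-uncovered elements are exchangeable given the history, so the policy's deterministic choice of $r$ of them is distributionally identical to a uniformly random sample of $r$ uncovered elements. The second is a concentration estimate on that random sample: because the remaining bags have geometrically decreasing sizes, a uniformly random element lands outside the $t$ largest uncovered bags with probability at most $2^{1-t}$, so a union bound over the $r$ sampled elements with $t=\log^2 r$ shows that, with probability at least $1-1/r$, all $r$ elements fall inside the $\log^2 r$ largest bags and hence touch at most $\log^2 r$ distinct bags. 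On the complementary event of probability at most $1/r$ the batch covers at most its own $r$ bags, contributing at most $\tfrac{1}{r}\cdot r = 1$ in expectation, giving the bound $\log^2 r + 1$.

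With this estimate in hand the assembly is routine. Writing the final value as a telescoping sum of batch marginal gains and taking expectations, linearity of expectation together with the conditional per-batch bound gives $f_{avg}(\pi) \le \tfrac{k}{r}\big(\log^2 r + 1\big) = O\big(\tfrac{k\log^2 r}{r}\big)$, since a width-$r$ policy needs $k/r$ batches to select $k$ elements. Comparing with $f_{avg}(\pi^*)=k$ yields the corollary.

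I expect the main obstacle to be the per-batch estimate, and specifically making the exchangeability step fully rigorous: one must argue that conditioning on the realized outcomes of all earlier batches leaves the assignment of the remaining elements to the remaining bags uniform, so that no adaptive rule can outperform a uniform sample within a batch. Once that symmetry is pinned down, the concentration bound and the summation are straightforward, and the entire dependence on $r$ is driven by the choice $t=\log^2 r$, which is exactly what produces the $\log^2 r/r$ rate.
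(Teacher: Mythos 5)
Your proposal is correct and is essentially the paper's own argument: the paper derives this corollary directly from the hard instance of Theorem~\ref{thm:hard}, noting that each size-$r$ batch has expected marginal gain at most $\log^2 r + 1$ (via the same symmetry/random-subset reduction and the geometric bag sizes), so a width-$r$ policy attains at most $\frac{k}{r}(\log^2 r + 1)$ against an optimum of $k$. The only difference is that you spell out the exchangeability and union-bound details that the paper leaves implicit, which is a faithful elaboration rather than a different route.
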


\section{Conclusion}
In this paper, we re-examined  the required rounds of adaptive observations in 
order to maximize an adaptive submodular function. We proposed an efficient 
batch policy that with $O(\log n \times\log k)$ adaptive rounds of observations 
can achieve a  $(1-1/e-\eps)$ approximation guarantee with respect to an 
optimal policy that carries out $k$ actions, from a set of $n$ actions, in a fully 
sequential setting. We also extended  our result to the case of adaptive stochastic 
minimum cost coverage and proposed a batch policy that provides the same 
guarantee in polylogarithmic adaptive rounds through a similar 
information-parallelism scheme. In the mean time, we also proved  the conjecture 
by \cite{golovin11} that the greedy policy achieves the asymptotically tight 
logarithmic approximation guarantee for adaptive stochastic minimum 
cost coverage. One interesting future direction is to develop a semi adaptive policy for maximizing the value of information \cite{chen2015sequential}.

\newpage
\bibliographystyle{abbrv}
\bibliography{references}

\newpage
\appendix
\section{Omitted proofs}\label{sec:omitted}
\subsection{Proof of Corollary~\ref{cr:setcovermain2}}
\begin{proofof}{Corollary \ref{cr:setcovermain2}}%{Theorem~\ref{thm:setcovermain}}
	Let $K$ be a random variable that indicates the number of items picked by 
	$\pi^*$, i.e., $K=S(\pi^*, \Phi)$. Set $\ell=(\ex{K}+1)  \log (\frac{Q}{ \delta\eta})$.
	Note that by definition of $\pi^*$ we have $f(\pi^*)=Q$ for all $\phi$, hence we have $f_{avg}(\pi^*) =Q$.
	By Lemma \ref{lm:1-1/e^l} we have
	\begin{align*}
	f_{avg}(\pi^{\tau_{\ell}}) &> \Big(1-e^{-\frac{\ell}{\ex{K}+1}}\Big) f_{avg}(\pi^*) &\text{By Lemma \ref{lm:1-1/e^l}} \\
	&= \Big(1-e^{-\log (\frac{Q}{ \delta\eta})}\Big) f_{avg}(\pi^*) &\text{Since 
	}\ell=\big(\ex{K}+1\big)  \log (\frac{Q}{ \delta\eta}) \\
	&= \left(1-\frac {\delta \eta} { Q}\right)f_{avg}(\pi^*)\\
	&= Q - \delta \eta. &\text{Since }f_{avg}(\pi^*) =Q
	\end{align*} 
	Recall that by definition we have 
	$\ex{f(\pi^{\tau_{\ell}})}=f_{avg}(\pi^{\tau_{\ell}})=Q - \delta \eta$. 
	Moreover, by adaptive monotonicity we have $  f(\pi^{\tau_{\ell}})\leq  f(\phi) = 
	Q$. Hence by Markov inequality with probability more than $1-\delta$ we have
	$f(\pi^{\tau_{\ell}})>Q-\eta$.
	%\footnote{Note that if we let $\delta=\min_{\phi} 
	%p (\phi)$, for a deterministic policy having %$f(\pi^{\tau_{\ell}}) > Q-\eta$ with 
	%probability greater than $1-\delta$ implies that we always have 
	%$f(\pi^{\tau_{\ell}}) > Q-\eta$.  This 
	%replaces the factor $n$ in the approximation factor with $\frac{1}{\delta}$. 
	%However, since $\delta$ might be exponentially smaller than $\frac 1 n$, we 
	%find the current statement of the theorem more interesting.}
	By definition of $\eta$ this implies that with probability more than $1-\delta$  we 
	have $f(\pi^{\tau_{\ell}})=Q$. Equivalently, probability of $f(\pi^{\tau_{\ell}})\neq Q$ is less than $\delta$. 
	
	Note that the only source of randomness in $\pi^{\tau_{\ell}}$ is from the randomness of the input. Hence, for any fixed $\phi$ we either have $f(\pi^{\tau_{\ell}}) = Q$ or $f(\pi^{\tau_{\ell}})\neq Q$, deterministically. On the other hand, by definition $\delta=\min_{\phi} p (\phi)$. Hence, since the probability of $f(\pi^{\tau_{\ell}})\neq Q$ is less than $\delta$, the probability of $f(\pi^{\tau_{\ell}})\neq Q$ must be $0$.
	Therefore, the policy $\pi^{\tau_{\ell}}$ reaches 
	the utility $Q$, certainly, after selecting  $\ell =\big(\ex{K}+1\big)\log (nQ/\eta)$ items 
	in 
	expectation.
\end{proofof}

%%%%%%%%%%%%%%%%%%%%%%%%%%%%%%%%%%%%%%%%%%%%%%%%%%%%
%%%%%%%%%%%%%%%%%%%%%%%%%%%%%%%%%%%%%%%%%%%%%%%%%%%%

%\begin{comment}

\subsection{Proof of Lemma \ref{lm:par:1-1/e^l}}

\begin{proofof}{Lemma~\ref{lm:par:1-1/e^l}}
	First we provide an upper bound on $f_{avg}(\pi^*)$. Pick an arbitrary $i\in \{0,\dots,\ell\}$. Note that by adaptive monotonicity, we have $f_{avg}(\pi^*) \leq f_{avg}(\pi_{[i]}@\pi^*)$. Next we show that $f_{avg}(\pi_{[i]}@\pi^*) \leq f_{avg}(\pi_{[i]})+\frac k {1-\eps}\Delta_i(\pi)$, where $\Delta_i(\pi)=f_{avg}(\pi_{[i+1]})-f_{avg}(\pi_{[i]})$. This implies that 
	\begin{align}\label{eq:main2}
	f_{avg}(\pi^*) \leq f_{avg}(\pi_{[i]}@\pi^*) \leq f_{avg}(\pi_{[i]})+\frac k {1-\eps}\Delta_i(\pi).
	\end{align}

	Let $\Psi_i$ be a random variable that indicates the partial realization of the elements selected by $\pi_{[i]}$. We use $\psi_i$ to indicate a realization of $\Psi_i$. Let $\bar \Psi$ be a random variable that indicates the last partial realization that is queried by $\pi_{[i]}$ (ignoring the selected elements in the last batch that is not queried yet).
%	For $j\geq i$ let $\Psi_j$ be a random variable that indicates the partial realization of $\pi_{[i]}@\pi^*_{[j-i]}$. We use $\psi_j$ to indicate a realization of $\Psi_j$.
	%
%	We define a random variable $\Gamma(e|\psi_j) = f(\psi_j\cup \Phi(e))-f(\psi_j)$. Note that by definition $\Delta(e|\psi_j)=\ex{\Gamma(e|\psi_j)}$. Let $X_{e,\psi_j}$ be a binary random variable that is $1$ if and only if policy $\pi_{[i]}@\pi^*$ observes subrealization $\psi_j$ and picks item $e$ right after. Note that when the policy is deciding whether to pick $e$ or not, it is not aware of the actual outcome of $\Gamma(e|\psi_j)$. Hence $X_{e,\psi_j}$ and $\Gamma(e|\psi_j)$ are independent.
%	We define $p(\psi_j)$ to be the probability that policy $\pi_{[i]}@\pi^*$ observes subrealization $\psi_j$. 

	%Next we use the notations $\1$ and $\theta$ as defined in Section~\ref{sec:setcover}. For the simplicity of notation we drop the notation of the policy from the subscripts and define $\theta:=\theta_{\pi^{\tau_i}@\pi^*}$ and $\1_{\psi_i,\psi,e,\theta}:=\1_{\psi_i,\psi,e,\pi^{\tau_i}@\pi^*,\theta_{\pi^{\tau_i}@\pi^*}}$.
	We have

	\begin{align*}
	&f_{avg}(\pi_{[i]}@\pi^*) - f_{avg}(\pi_{[i]}) \\
		&= \sum_{\psi_i\in \Psi_i}\sum_{\psi_i \sub \psi'}\sum_{e\notin \dom(\psi')}\sum_{\theta\in \Theta_{\pi^*}} p(\psi')p(\theta)\1_{\psi_i,\psi',e,\theta}\ex{f(\psi'\cup \Phi(e))-f(\psi')|\psi'\sub \Phi \text{ and } \Theta=\theta} &\text{By Definition}\\ 
	&= \sum_{\psi_i\in \Psi_i}\sum_{\psi_i \sub \psi'}\sum_{e\notin \dom(\psi')}\sum_{\theta\in \Theta_{\pi^*}} p(\psi')p(\theta)\1_{\psi_i,\psi',e,\theta}\ex{f(\psi'\cup \Phi(e))-f(\psi')|\psi' \sub \Phi} &\text{Independency}\\ 
	&=  \sum_{\psi_i\in \Psi_i}\sum_{\psi_i \sub \psi'}\sum_{e\notin \dom(\psi')} \sum_{\theta\in \Theta_{\pi^*}} p(\psi')p(\theta)\1_{\psi_i,\psi',e,\theta}\Delta(e|\psi') &\text{Def. of $\Delta(e|\psi')$}\\
	&\leq \sum_{\psi_i\in \Psi_i}\sum_{\psi_i \sub \psi'}\sum_{e\notin\dom(\psi')} \sum_{\theta\in \Theta_{\pi^*}} p(\psi')p(\theta)\1_{\psi_i,\psi',e,\theta}\Delta(e|\psi_i) &\text{Adaptive Submo.}\\
	&\leq \sum_{\psi_i\in \Psi_i}\sum_{\psi_i \sub \psi'}\sum_{e\notin\dom(\psi')} \sum_{\theta\in \Theta_{\pi^*}} p(\psi')p(\theta)\1_{\psi_i,\psi',e,\theta}\max_{e'\notin \dom(\psi_i)}\Delta(e'|\psi_i)\\
	&= \sum_{\psi_i\in \Psi_i}\Big(\sum_{\psi_i \sub \psi'}\sum_{e\notin\dom(\psi')} \sum_{\theta\in \Theta_{\pi^*}} p(\psi')p(\theta)\1_{\psi_i,\psi',e,\theta}\Big)\max_{e'\notin \dom(\psi_i)}\Delta(e'|\psi_i)\\
	&\leq \sum_{\psi_i\in \Psi_i} p(\psi_i) k \max_{e'\notin \dom(\psi_i)}\Delta(e'|\psi_i) \\
	&= k \sum_{\psi_i\in \Psi_i} p(\psi_i) \max_{e'\notin \dom(\psi_i)}\Delta(e'|\psi_i)\\
	&= k \E_{\Psi_i}\Big[\max_{e'\notin \dom(\Psi_i)}\Delta(e'|\Psi_i) \Big]\\
	&= k \E_{\bar \Psi}\Big[\E_{ \bar \Psi \sub \Psi_i}\Big[\max_{e'\notin \dom(\Psi_i)}\Delta(e'|\Psi_i) \Big]\Big] \\
	&\leq k \E_{\bar \Psi}\Big[\frac 1 {1-\eps}\max_{e'\notin \dom(\Psi_i^{\pi})}\E_{ \bar \Psi \sub \Psi_i^{\pi}}\big[\Delta(e'|\Psi_i^{\pi}) \big]\Big] & \text{Information Gap} \\
	& = \frac k {1-\eps} \E_{\bar \Psi}\Big[\max_{e'\notin \dom(\Psi_i^{\pi})}\E_{\bar \Psi \sub \Psi_i^{\pi} }\big[\Delta(e'|\Psi_i^{\pi}) \big]\Big] \\
	& = \frac k {1-\eps} \Delta_i(\pi) \\
	\end{align*}
	This proves Inequality \ref{eq:main2} as promised.	
	Let us define $$\Delta_i^*=f_{avg}(\pi^*)-f_{avg}(\pi_{[i]}).$$ Inequality \ref{eq:main2} implies that $$\Delta_{i}^*\leq \frac k {1-\eps} (\Delta_{i}^*-\Delta_{i+1}^*).$$ By a simple rearrangement we have $$\Delta_{i+1}^*\leq\big(1-\frac {1-\eps} {k}\big)\Delta_{i}^*.$$ By iteratively applying this inequality we have $$\Delta_{\ell}^*\leq\big(1-\frac {1-\eps} {k}\big)^{\ell}\Delta_{0}^*\leq e^{-\frac {(1-\eps)\ell} {k}} \Delta_{0}^*.$$ By applying the definition of $\Delta_{i}^*$ and some rearrangements we have $$f_{avg}(\pi_{[\ell]}) > \big(1-e^{-\frac{(1-\eps)\ell}{k}}\big) f_{avg}(\pi^*)\geq \big(1-e^{-\frac{\ell}{k}}-\eps\big) f_{avg}(\pi^*)$$ as desired.
	
\end{proofof}

%\end{comment}

%%%%%%%%%%%%%%%%%%%%%%%%%%%%%%%%%%%%%%%%%%%%%%%%%%%%
%%%%%%%%%%%%%%%%%%%%%%%%%%%%%%%%%%%%%%%%%%%%%%%%%%%%

\subsection{Lemma~\ref{lm:batch:1-1/e^l}}

Let us start with some definitions. Let $\pi$ be the semi-adaptive greedy policy as defined in section~\ref{sec:batch:kcover}, using restricted information gap (Definition \ref{def:RestictedInfoGap}). For an arbitrary number $\tau$ let $\pi^{\tau}$ be a policy that selects elements according to $\pi$ and stops when the semi-adaptive value of all of the remaining elements is less than or equal to $\tau$. We define $\tau_{i}$ to be a number such that the expected number of elements selected by $\pi^{\tau_i}$ is $i$. 

%In the following lemma we use the notations $\1$ and $\theta$ as defined in Section~\ref{sec:setcover}. For the simplicity of notation we drop the notation of the policy from the subscripts and define $\theta:=\theta_{\pi^{\tau_i}@\pi^*}$ and $\1_{\psi_i,\psi,e,\theta}:=\1_{\psi_i,\psi,e,\pi^{\tau_i}@\pi^*,\theta_{\pi^{\tau_i}@\pi^*}}$.

Now we are ready to prove Lemma~\ref{lm:batch:1-1/e^l}.

\begin{lemma}\label{lm:batch:1-1/e^l}
	For any policy $\pi^*$ and any positive integer $\ell$ we have 
	\begin{align*}
	f_{avg}(\pi^{\tau_{\ell}}) > \big(1-e^{-\frac{(1-\eps)\ell}{\ex{K}+1}}\big) f_{avg}(\pi^*),
	\end{align*}
	where $K$ is a random variable that indicates the number of items picked by $\pi^*$, i.e. $K=|S(\pi^*, \Phi)|$.
\end{lemma}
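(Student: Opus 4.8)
The plan is to mirror the proof of Lemma~\ref{lm:1-1/e^l} — which already produces the threshold policy $\pi^{\tau_i}$ and the $\ex{K}+1$ denominator — while grafting onto it the information-gap machinery from the proof of Lemma~\ref{lm:par:1-1/e^l}, whose job is to absorb the discrepancy between the true conditional marginals and the semi-adaptive values on which $\pi^{\tau_i}$ actually acts. Concretely, I would fix an arbitrary $i\in\{0,\dots,\ell\}$, set $\Delta^{\tau_i}(\pi)=f_{avg}(\pi^{\tau_i})-f_{avg}(\pi^{\tau_{i-1}})$, and note that adaptive monotonicity gives $f_{avg}(\pi^*)\leq f_{avg}(\pi^{\tau_i}@\pi^*)$. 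It then suffices to establish the one-step bound
\[
f_{avg}(\pi^{\tau_i}@\pi^*)-f_{avg}(\pi^{\tau_i})\leq \frac{\ex{K}}{1-\eps}\,\Delta^{\tau_i}(\pi),
\]
which is the semi-adaptive analogue of Inequality~\eqref{eq:main}.

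To prove this one-step bound I would expand the left-hand side exactly as in the two earlier proofs, using the deterministic indicators $\1_{\psi_i,\psi',e,\theta}$ to account for the elements selected by the $\pi^*$-part of $\pi^{\tau_i}@\pi^*$ and the independence observation that collapses each term to $\Delta(e|\psi')$. Letting $\Psi_i$ be the realization produced by $\pi^{\tau_i}$ and $\bar\Psi$ its last queried prefix, adaptive submodularity then yields $\Delta(e|\psi')\leq\Delta(e|\psi_i)\leq\max_{e'\notin\dom(\psi_i)}\Delta(e'|\psi_i)$, just as in Lemma~\ref{lm:par:1-1/e^l}.

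The decisive step is to pass from the per-realization quantity $\max_{e'}\Delta(e'|\Psi_i)$ to the $\psi_i$-independent threshold $\tau_i$. Conditioning on $\bar\Psi$ and invoking the information gap (which stays at least $1-\eps$ throughout the selection phase of $\pi^{\tau_i}$) bounds $\E_{\bar\Psi\sub\Psi_i}[\max_{e'}\Delta(e'|\Psi_i)]$ by $\tfrac{1}{1-\eps}$ times a maximum taken outside the expectation at $\bar\Psi$, which the stopping rule of $\pi^{\tau_i}$ caps at $\tau_i$; hence $\E_{\bar\Psi\sub\Psi_i}[\max_{e'}\Delta(e'|\Psi_i)]\leq \tau_i/(1-\eps)$. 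Because $\tau_i$ is now a constant, the indicator sum factors out globally and counts the expected number of items chosen by $\pi^*$, namely $\ex{K}$, exactly as the corresponding sum equals $\ex{K}$ in Lemma~\ref{lm:1-1/e^l}; together with $\tau_i\leq\Delta^{\tau_i}(\pi)$ this gives the one-step bound. I expect this reconciliation to be the main obstacle: in Lemma~\ref{lm:par:1-1/e^l} the per-realization maximum could only be charged at the worst-case cardinality rate $k$, but in the coverage setting there is no cardinality cap on $\pi^*$, so the averaging through the information gap must be arranged so that the $\psi_i$-independent threshold multiplies the \emph{global} count $\ex{K}$ rather than $k$. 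The restricted information gap of Definition~\ref{def:RestictedInfoGap}, whose numerator depends only on the last queried realization $\bar\Psi$, is precisely what makes this factorization go through cleanly.

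Finally, writing $\Delta_i^*=f_{avg}(\pi^*)-f_{avg}(\pi^{\tau_i})$, the one-step bound reads $\Delta_i^*\leq\frac{\ex{K}}{1-\eps}(\Delta_{i-1}^*-\Delta_i^*)$; rearranging gives $\Delta_i^*\leq\big(1-\frac{1-\eps}{\ex{K}+1-\eps}\big)\Delta_{i-1}^*$, and iterating $\ell$ times together with $\ex{K}+1-\eps\leq\ex{K}+1$ yields $\Delta_\ell^*\leq e^{-(1-\eps)\ell/(\ex{K}+1)}\Delta_0^*$, which rearranges to the claimed inequality $f_{avg}(\pi^{\tau_{\ell}}) > \big(1-e^{-(1-\eps)\ell/(\ex{K}+1)}\big) f_{avg}(\pi^*)$.
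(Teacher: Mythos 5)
Your skeleton matches the paper's proof of Lemma~\ref{lm:batch:1-1/e^l} almost line for line: adaptive monotonicity, the indicator expansion collapsing each term to $\Delta(e|\psi')$, the one-step bound $f_{avg}(\pi^{\tau_i}@\pi^*)-f_{avg}(\pi^{\tau_i})\leq\frac{\ex{K}}{1-\eps}\Delta^{\tau_i}(\pi)$ via $\tau_i\leq\Delta^{\tau_i}(\pi)$, and the closing recursion (your algebra there is correct). The gap is at your ``decisive step,'' and it is genuine. Once you replace $\Delta(e|\psi')$ by $\max_{e'\notin\dom(\psi_i)}\Delta(e'|\psi_i)$, each branch of the sum carries a \emph{product} of two $\psi_i$-dependent quantities: that max marginal, and the quantity the inner indicator sum computes on the branch $\psi_i$, namely $p(\psi_i)$ times the conditional expected number of items the $\pi^*$-part selects after $\psi_i$. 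Your information-gap step only controls a conditional expectation, $\E_{\bar\Psi\sub\Psi_i}[\max_{e'}\Delta(e'|\Psi_i)]\leq\tau_i/(1-\eps)$; it does not make $\tau_i/(1-\eps)$ a constant multiplying every term. A bound on the expectation of one factor does not bound the expectation of the product: given $\bar\Psi$, the realizations of the still-unqueried batch on which the max marginal stays large can be exactly the realizations on which $\pi^*$ selects many items, so the two factors can be positively correlated and the sum can exceed $\ex{K}\cdot\tau_i/(1-\eps)$. This is precisely the coupling you yourself flagged as the main obstacle; in Lemma~\ref{lm:par:1-1/e^l} it is harmless because the count is capped pointwise by $k$, but in the coverage setting only a \emph{per-realization} bound on the marginals lets the indicator sum collapse to $\ex{K}$.

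The paper's proof supplies exactly that missing pointwise bound, its Inequality~\ref{eq:Delta<Tau}: $\Delta(e|\psi')\leq\tau_i/(1-\eps)$ holds deterministically for every $\psi'$ consistent with $\psi_i$ and every $e$, so the constant factors out of the whole quadruple sum and the indicators sum to $\ex{K}$ exactly as in Lemma~\ref{lm:1-1/e^l}. That bound is proven by a case analysis on how $\pi^{\tau_i}$ stops. If it stops right after querying a batch, semi-adaptive values coincide with true marginals at the queried realization, so all marginals are at most $\tau_i$ and adaptive submodularity propagates this to every extension. If it stops mid-batch, the \emph{restricted} information gap of Definition~\ref{def:RestictedInfoGap} is what does the work: it compares the maximum semi-adaptive value against $\max_e\Delta(e|\bar\psi)$, a quantity that is deterministic given the last queried realization $\bar\psi$ (and is realized as the semi-adaptive value of the first item of the last batch); combined with the stopping cap $\sav\leq\tau_i$ this gives $\max_e\Delta(e|\bar\psi)\leq\tau_i/(1-\eps)$, and adaptive submodularity again propagates it pointwise to every $\psi'\supseteq\psi_i\supseteq\bar\psi$. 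Your final sentence names the restricted gap, but the mechanism you actually describe --- bounding $\E[\max]$ by $\frac{1}{1-\eps}\max\E[\cdot]$ --- is the ordinary-gap, in-expectation argument, which is exactly the step that fails here; to repair the proof, replace it with this realization-wise case analysis on the stopping mode of $\pi^{\tau_i}$.
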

\begin{proof}
    Let us define $\Delta^{\tau_i}(\pi)=f_{avg}(\pi^{\tau_{i}})-f_{avg}(\pi^{\tau_{i-1}})$. 
    Recall that in expectation $\pi^{\tau_{i}}$ picks one item more than $\pi^{\tau_{i-1}}$. Moreover note that by definition of $\pi^{\tau_{i}}$ the semi-adaptive value of all of the items selected by $\pi^{\tau_{i}}$ is at least $\tau_i$. Hence 
\begin{align}\label{eq:tauVSdelta}
    f_{avg}(\pi^{\tau_{i}})-f_{avg}(\pi^{\tau_{i-1}}) = \Delta^{\tau_i}(\pi) \geq \tau_i.
\end{align}

	%Note that for any arbitrary $i\in \{0,\dots,\ell\}$ by adaptive monotonicity, we have $f_{avg}(\pi^*) \leq f_{avg}(\pi^{\tau_i}@\pi^*)$. 
	Let $\Psi_i$ be a random variable that indicates the partial realization of the elements selected by $\pi^{\tau_i}$. We use $\psi_i$ to indicate a realization of $\Psi_i$. %Let $\bar \Psi$ be a random variable that indicates the last partial realization that is queried in $\pi^{\tau_{i}}$.
%
	%We define a random variable $\Gamma(e|\psi') = f(\psi'\cup \Phi(e))-f(\psi')$. Note that by definition $\Delta(e|\psi')=\ex{\Gamma(e|\psi')}$. Now consider the policy $\pi^{\tau_i}@\pi^*$. Let $X_{e,\psi'}$ be a binary random variable that is $1$ if and only if policy $\pi^{\tau_i}@\pi^*$ observes subrealization $\psi'$ and picks item $e$ right after.
%	
	Next we show that for any consistent partial realization $ \psi_i \sub \psi'$ and any element $e$ we have
	\begin{align}\label{eq:Delta<Tau}
	\Delta(e|\psi') \leq \frac{\tau_{i}}{1-\eps}.
	\end{align}
	We have two cases based on the time that the policy $\pi^{\tau_i}$ stops.
	\begin{itemize}
	\item The policy $\pi^{\tau_i}$ queries a batch and then observe that the semi-adaptive value of all items drop below $\tau_i$ and then $\pi^{\tau_i}$ stops. 
	\item While adding items to a batch (and before performing the query), the semi-adaptive value of all items drop below $\tau_i$ and then $\pi^{\tau_i}$ stops. 
	\end{itemize}
	Note that in the first case the semi-adaptive values of all of the items are equal to their actual expected marginal benefit (i.e., $\Delta(e|\psi')$). Hence, we have $\Delta(e|\psi') \leq \Delta(e|\psi_i) \leq \tau_{i}$. In the second case, by the definition of the algorithm, the restricted information gap is at least $1-\epsilon$. This together with the fact that the semi-adaptive values of all items are below $\tau_i$ implies that the expected marginal benefit of the first item that was added to the last batch is at most $\frac{\tau_i}{1-\epsilon}$. This together with adaptive monotonicity implies $\Delta(e|\psi') \leq \frac{\tau_{i}}{1-\eps}$ as desired.

	We have
		\begin{align*}
	&f_{avg}(\pi^{\tau_i}@\pi^*) - f_{avg}(\pi^{\tau_i}) \\
	&= \sum_{\psi_i\in \Psi_i}\sum_{\psi_i \sub \psi'}\sum_{e\notin \dom(\psi')}\sum_{\theta\in \Theta_{\pi^*}} p(\psi')p(\theta)\1_{\psi_i,\psi',e,\theta}\ex{f(\psi'\cup \Phi(e))-f(\psi')|\psi' \sub \Phi  \text{ and } \Theta=\theta} &\text{By Definition}\\ 
	&= \sum_{\psi_i\in \Psi_i}\sum_{\psi_i \sub \psi'}\sum_{e\notin \dom(\psi')}\sum_{\theta\in \Theta_{\pi^*}} p(\psi')p(\theta)\1_{\psi_i,\psi',e,\theta}\ex{f(\psi'\cup \Phi(e))-f(\psi')|\psi' \sub \Phi} &\text{Independency}\\ 
	&=  \sum_{\psi_i\in \Psi_i}\sum_{\psi_i \sub \psi'}\sum_{e\notin \dom(\psi')} \sum_{\theta\in \Theta_{\pi^*}} p(\psi')p(\theta)\1_{\psi_i,\psi',e,\theta}\Delta(e|\psi') &\text{Definition of $\Delta(e|\psi')$}\\
	&\leq \sum_{\psi_i\in \Psi_i}\sum_{\psi_i \sub \psi'}\sum_{e\notin \dom(\psi')} \sum_{\theta\in \Theta_{\pi^*}} p(\psi')p(\theta)\1_{\psi_i,\psi',e,\theta}\frac{\tau_i}{1-\eps} &\text{Inequality \ref{eq:Delta<Tau}}\\
	&\leq \sum_{\psi_i\in \Psi_i}\sum_{\psi_i \sub \psi'}\sum_{e\notin \dom(\psi')} \sum_{\theta\in \Theta_{\pi^*}} p(\psi')p(\theta)\1_{\psi_i,\psi',e,\theta}\frac{\Delta^{\tau_i}(\pi)}{1-\eps} &\text{Inequality \ref{eq:tauVSdelta}} \\
	&= \Big(\sum_{\psi_i\in \Psi_i}\sum_{\psi_i \sub \psi'}\sum_{e\notin \dom(\psi')} \sum_{\theta\in \Theta_{\pi^*}} p(\psi')p(\theta)\1_{\psi_i,\psi',e,\theta}\Big)\frac{\Delta^{\tau_i}(\pi)}{1-\eps} \\
	& = \frac{\ex{K}}{1-\eps}\Delta^{\tau_i}(\pi).
	\end{align*}
	Now define $\Delta_i^*=f_{avg}(\pi^*)-f_{avg}(\pi^{\tau_i})$. The above inequality implies that $$\Delta_{i}^*\leq \frac{\ex{K}}{1-\eps}(\Delta_{i-1}^*-\Delta_{i}^*).$$ By a simple rearrangement we have $$\Delta_{i}^*\leq\big(1-\frac 1 {\frac{\ex{K}}{1-\eps}+1}\big)\Delta_{i-1}^*\leq \big(1-\frac {1-\eps} {{\ex{K}}+1}\big)\Delta_{i-1}^*.$$ By iteratively applying this inequality we have $$\Delta_{\ell}^*\leq\big(1-\frac {1-\eps} {\ex{K}+1}\big)^{\ell}\Delta_{0}^*\leq e^{-\frac{(1-\eps)\ell}{\ex{K}+1}} \Delta_{0}^*.$$ By applying the definition of $\Delta_{i}^*$ and some rearrangement we have $$f_{avg}(\pi^{\tau_{\ell}}) > \big(1-e^{-\frac{(1-\eps)\ell}{\ex{K}+1}}\big) f_{avg}(\pi^*)$$ as desired.
\end{proof}

%%%%%%%%%%%%%%%%%%%%%%%%%%%%%%%%%%%%%%%%%%%%%%%%%%%%
%%%%%%%%%%%%%%%%%%%%%%%%%%%%%%%%%%%%%%%%%%%%%%%%%%%%

\subsection{Proof of Theorem \ref{thm:perf:setcover}}

\begin{proofof}{Theorem~\ref{thm:perf:setcover}}
	Let $K$ be a random variable that indicates the number of items picked by $\pi^*$. Set $$\ell=\frac{\ex{K}+1}{1-\eps}  \log (nQ/\eta).$$
	Note that by definition of $\pi^*$ we have $f(\pi^*)=Q$ for all $\phi$, hence we have $f_{avg}(\pi^*) =Q$.
	By Lemma \ref{lm:batch:1-1/e^l} we have
	\begin{align*}
	f_{avg}(\pi^{\tau_{\ell}}) &> \Big(1-e^{-\frac{(1-\eps)\ell}{\ex{K}+1}}\Big) f_{avg}(\pi^*) &\text{By Lemma \ref{lm:batch:1-1/e^l}} \\
	&= \Big(1-e^{-log (nQ/\eta)}\Big) f_{avg}(\pi^*) &\text{Since }\frac{\ex{K}+1}{1-\eps}  \log (nQ/\eta) \\
	&= (1-\frac \eta {n Q})f_{avg}(\pi^*)\\
	&= Q - \frac \eta n. &\text{Since }f_{avg}(\pi^*) =Q
	\end{align*} 
	Recall that, by definition $f_{avg}(\pi^{\tau_{\ell}}) = \ex{f(\pi^{\tau_{\ell}})}$. Moreover, note that by adaptive monotonicity we have $  f(\pi^{\tau_{\ell}})\leq  f(\phi) = Q$. Hence by Markov inequality with probability $1-1/n$ we have
	$f(\pi^{\tau_{\ell}})>Q-\eta$.
	By definition of $\eta$ this implies that with probability $1-\frac 1 n$  we have $f(\pi^{\tau_{\ell}})=Q$.
	Therefore, with probability $1-1/n$, $\pi^{\tau_{\ell}}$ reaches $Q$ after selecting  $\ell =\big(\ex{K}+1\big)\log (nQ/\eta)$ items in expectations. Otherwise, $\pi$ picks at most all $n$ items. Hence the expected number of items that $\pi$ picks is upper bounded by
	\begin{align*}
	(1-\frac 1 n) \times \frac{\ex{K}+1}{1-\eps}  \log (\frac{nQ}{\eta}) +  \frac 1 n \times n  \leq  \big(\frac{c_{avg} (\pi^*)+1}{1-\eps}\big)  \log\Big(\frac{nQ}{\eta}\Big)+1.
	\end{align*}
	
\end{proofof}

%%%%%%%%%%%%%%%%%%%%%%%%%%%%%%%%%%%%%%%%%%%%%%%%%%%%
%%%%%%%%%%%%%%%%%%%%%%%%%%%%%%%%%%%%%%%%%%%%%%%%%%%%

\subsection{Proof of Theorem \ref{thm:round:setcover}}
 First let us start with a couple of definitions. We define random variable $\bar \Psi_t$ to be the partial realization obtained by the $t$-th query. Next we prove Theorem \ref{thm:round:setcover}.

\begin{proofof}{Theorem~\ref{thm:round:setcover}}   
	In order to prove this theorem we show that $f_{avg}(\pi^T) > Q-\frac{\eta}{n}$. Note that $f(\pi^T)\leq Q$. This together with a Markov bound imply $f(\pi^T) > Q-\eta$, and hence $f(\pi^T)= Q$, with probability $1-1/n$. In this theorem we simply set $\eps=0.01$.
	
	Let us set $$\delta=\frac {\frac{\eta}{2Qn}}{\log_{\frac 1 {1 -\eps/2}} \frac {2Qn^2} {\eta}}.$$ By applying Lemma \ref{lm:superRound}, $\log_{\frac 1 {1 -\eps/2}} \frac {2Qn^2} {\eta}$ times iteratively we have $$\max_{e\notin \dom(\bar \Psi_{T})} \Delta(e|\bar \Psi_{T}) \leq \big(1-\frac {\eps} 2\big)^{\log_{\frac 1 {1 -\eps/2}} \frac {2Qn^2} {\eta}} \Delta_1(\pi) = \frac {\eta}{2Qn^2}\Delta_1(\pi),$$
	with probability $1-\delta \times \log_{\frac 1 {1 -\eps/2}} \frac {2Qn^2} {\eta} = 1-\frac{\eta}{2Qn}$. This means that with probability $1-\frac{\eta}{2Qn}$ the total expected marginal gain of the elements added after the $T$-th query is at most $${n} \times \frac {\eta}{2Qn^2}\Delta_1(\pi) =  \frac{\eta}{2Qn} \Delta_1(\pi) \leq \frac{\eta}{2n},$$ where 
	\begin{align*}
	T&=\log_{\frac 1 {1-\eps/ 2}} \big(\frac{n}{\delta}\big) \times \log_{\frac 1 {1 -\eps/2}} \frac {2Qn^2} {\eta} \\&\in O\big((\log n + \log \log (Qn/\eta))\log (Qn/\eta)\big)\\
	&\in O\big(\log n \log (Qn/\eta)\big).\text{\footnotemark}
	\end{align*}
	This implies that $$f_{avg}(\pi^T) \geq (1-\frac{\eta}{2Qn})(Q-\frac{\eta}{2n}) > Q-\frac{\eta}{n}.$$
	\footnotetext{We can assume $\log n > \log \log (Qn/\eta)$, since otherwise $ n \leq \log (Qn/\eta)$ and hence trivially $T\in O(\log (Qn/\eta))$ as desired.}
	%	\begin{align*}
	%	f_{avg}(\pi_{[\ell]}) > \big(1-e^{-\frac{\ell}{\ex{k}}}-3 \eps\big) f_{avg}(\pi^*),
	%	\end{align*}
	This implies that $f(\pi^T) = Q$ with probability at least $1-\frac{1}{n}$, as desired. 
\end{proofof}

%%%%%%%%%%%%%%%%%%%%%%%%%%%%%%%%%%%%%%%%%%%%%%

\section{Truncation}\label{appx:trunc}

Consider the following simple adaptive submodular function $f(\cdot)$. We have three elements $\{x,y,z\}$ each of $x$ and $y$ are associated with an independent uniform random binary variable. 
The value of the empty set is zero. If element $z$ exists in a set, it deterministically adds a value $1$ to the set. If there is only one of $x$ and $y$ in the set, it adds a value $1$ to the set. However, if both $x$ and $y$ are in the set, if their corresponding random variables match, they add a value $2$ to the set, and otherwise add nothing. 

Note that if one of $x$ and $y$ exists in a set, adding the other one does not change the value of the set in expectation. In all other cases the value of adding an element is $1$. This implies that this function is adaptive submodular.
However, $g(\psi)=\min\big(f(\psi),1\big)$ is not adaptive submodular. For example the marginal gain of $z$ on $g\big(\{(x,1)\}\big)$ is $0$ but the marginal gain of $z$ on $g\big(\{(x,1),(y,0)\}\big)$ is $1$.

\end{document}